\documentclass[nonacm,screen,acmsmall]{acmart}
\AtBeginDocument{%
  }

\usepackage{tcolorbox}
\usepackage{multirow}
\usepackage{multicol}
\usepackage{colortbl}
\usepackage{pifont}
\usepackage{caption}

\begin{document}

\title{Gradient-free Task-Conditioned Retrieval for On-Device In-Context Learning}

\author{Xinyu Luo}
\email{xinyuluo8-c@my.cityu.edu.hk}
\orcid{0009-0007-5229-8733}

\author{Hui Liu}
\email{liuhui3@cityu.edu.hk}
\orcid{0000-0002-2279-7634}
\affiliation{%
  \institution{City University of Hong Kong}
  \city{Hong Kong SAR}
  \country{China}}

\author{Yihua Shao}
\email{yihuajerry@gmail.com}
\orcid{0009-0002-0475-7142}
\affiliation{%
  \institution{Institute of Automation Chinese Academy of Sciences}
  \city{Beijing}
  \country{China}
}

\author{Junyi Yang}
\email{junyiyang8-c@my.cityu.edu.hk}
\orcid{0000-0002-5867-4943}

\author{Arindam Basu}
\email{arinbasu@cityu.edu.hk}
\orcid{0000-0003-1035-8770}

\author{Haoliang Li}
\correspondingauthor
\email{haolia.li@cityu.edu.hk}
\orcid{0000-0002-8723-8112}
\affiliation{%
  \institution{City University of Hong Kong}
  \city{Hong Kong SAR}
  \country{China}}

\renewcommand{\shortauthors}{Luo et al.}

\begin{abstract}
On-device in-context learning (ICL) relies on pre-inference retrieval to select demonstrations for useful context before downstream model inference. This retrieval must exploit task-specific information while operating over local memories under limited computation, memory, and data-exposure budgets. We propose \emph{Conditional Retrieval Alignment} (CoRA), a gradient-free framework that converts a frozen encoder into a task-conditioned retriever using paired candidate inputs and outputs. CoRA selects complementary encoder layers, constructs an output-derived conditioning space from candidate memory, and aligns candidate input representations to this space through closed-form ridge regression. Low-rank factorization then produces a compact retrieval basis where candidate outputs are used only during offline index construction, whereas query-time retrieval requires only the query input and precomputed index. We show that CoRA's rank-constrained basis is the optimal low-rank compression of the output-conditioned fitted representation, and derive an exact two-pass streaming construction that avoids materializing the full fitted matrix. We further extend the framework to multimodal exemplar retrieval by incorporating visual representations into the conditioning and retrieval spaces. Experiments across ten textual datasets and four multimodal benchmarks with Llama-3.2-1B, MobileLLM-Pro, OpenFlamingo-3B, and Qwen3.5-2B, as well as end-to-end Raspberry Pi~5 deployment demonstrate that CoRA supports effective task-conditioned retrieval without retriever fine-tuning, backpropagation, or target-model calls.
\end{abstract}

\begin{CCSXML}
<ccs2012>
   <concept>
       <concept_id>10002951.10003317.10003365.10003370</concept_id>
       <concept_desc>Information systems~Retrieval on mobile devices</concept_desc>
       <concept_significance>500</concept_significance>
       </concept>
   <concept>
       <concept_id>10002951.10003317.10003338.10003343</concept_id>
       <concept_desc>Information systems~Learning to rank</concept_desc>
       <concept_significance>500</concept_significance>
       </concept>
   <concept>
       <concept_id>10002951.10003317.10003325.10003326</concept_id>
       <concept_desc>Information systems~Query representation</concept_desc>
       <concept_significance>500</concept_significance>
       </concept>
   <concept>
       <concept_id>10002951.10003317.10003338.10003346</concept_id>
       <concept_desc>Information systems~Top-k retrieval in databases</concept_desc>
       <concept_significance>500</concept_significance>
       </concept>
   <concept>
       <concept_id>10002951.10003317.10003347.10003348</concept_id>
       <concept_desc>Information systems~Question answering</concept_desc>
       <concept_significance>500</concept_significance>
       </concept>
   <concept>
       <concept_id>10002951.10003317.10003359.10003363</concept_id>
       <concept_desc>Information systems~Retrieval efficiency</concept_desc>
       <concept_significance>500</concept_significance>
       </concept>
 </ccs2012>
\end{CCSXML}

\ccsdesc[500]{Information systems~Query representation}
\ccsdesc[500]{Information systems~Learning to rank}
\ccsdesc[500]{Information systems~Retrieval efficiency}
\ccsdesc[500]{Information systems~Top-k retrieval in databases}
\ccsdesc[500]{Information systems~Retrieval on mobile devices}
\ccsdesc[500]{Information systems~Question answering}

\maketitle

\section{Introduction}\label{s1_intro}

Foundation models are increasingly deployed in settings that require efficient adaptation to diverse downstream tasks and user needs~\cite{kudugunta2024matformer,lin2020mcunet,zhang2024nomad}. One lightweight adaptation mechanism is \emph{test-time context construction}: a retriever selects a small set of textual or multimodal exemplars from a candidate pool to condition a model through in-context learning (ICL)~\cite{brown2020language,ding24ondevice,openclaw_github,zhao2026retrieval}. Because this selection occurs before the downstream model is invoked, we refer to it as \emph{pre-inference retrieval}. Retrieval quality can affect the utility and efficiency of ICL~\cite{an2023skill,lu2022fantastically,voronov2024mind,zhoubatch}. The problem is particularly relevant in personal and on-device deployments, where retrieval is performed over local memories containing potentially sensitive information and must operate under limited compute and memory budgets~\cite{gao2024aligning,shao2024privacylens,sun2024personalized,wang2023security,zhang2023privacy}. In this setting, practical retrieval systems must jointly consider the quality of selected exemplars, the cost of constructing their representations and index, and the latency of query-time search~\cite{leonhardt2024efficient}.

The central difficulty is that input similarity alone does not fully determine the utility of an ICL exemplar. Useful demonstrations should be related to the query while also reflecting task-specific regularities in their input-output relations~\cite{huang2024multimodal,li2025taco,rubin2022learning}. Existing approaches address this requirement in two main ways. One line adapts the retriever or representation space using task-specific learning signals~\cite{doveh2024towards,ye2023compositional,zhaommicl}. Another involves a stronger target model through rewriting, reranking, pseudo-labeling, or target-guided retrieval~\cite{chen2025provoking,li2023unified}. These approaches can capture task-specific behavior, but their training procedures, repeated model calls, or backbone modifications may be difficult to accommodate when retrieval must remain local. By contrast, lightweight retrievers such as BM25~\cite{robertson1995okapi} and frozen embedding similarity offer inexpensive search and compact representations~\cite{lin2023dense,zhao2024dense}, but they primarily rely on input-side similarity and do not explicitly use the output regularities available in paired candidate memories~\cite{alayrac2022flamingo,an2023skill,chen2025can,liu2022makes}. This leaves an open design tension between task-conditioned exemplar selection and efficient local retrieval.

To understand this tension, our prior conference work~\cite{liu2024unraveling} analyzed the mechanisms underlying learning-based demonstration selection. We found that their effectiveness can be related to two complementary signals: multi-level input similarity, which captures different types of lexical, syntactic, and semantic correspondence, and task-specific output similarity, which favors exemplars whose outputs are related to the desired task behavior. Based on these findings, that work introduced MLSM, which selects representative encoder layers and combines their task-agnostic similarity signals through lightweight optimization, and TTF, which injects task-specific information through retriever parameter updates. These methods established the importance of the two signals, but addressed them separately. MLSM does not explicitly condition its retrieval space on exemplar outputs, whereas TTF obtains task-specific information through parameter updates. The conference work therefore leaves open whether input-output information can be incorporated directly into a compact retrieval space without heavy retriever fine-tuning or target-model involvement.

This article addresses this question with \emph{Conditional Retrieval Alignment} (\textbf{CoRA}), a gradient-free framework for task-conditioned exemplar retrieval. CoRA selects complementary frozen-encoder layers, aligns their candidate-input representations to a conditioning space constructed from paired candidate inputs and outputs, and extracts a compact low-rank retrieval basis through closed-form linear algebra. Candidate outputs are used only when constructing the index. Query-time retrieval requires only the query input representation and the precomputed index. CoRA thus makes output-informed exemplar selection compatible with local pre-inference retrieval, without retriever parameter updates, backpropagation, or target-model calls. CoRA further admits an exact streaming index-construction procedure that avoids materializing the full fitted representation matrix, and it extends naturally to multimodal retrieval by incorporating visual features into the conditioning and retrieval spaces.

Our contributions are as follows:
\begin{itemize}
    \item We introduce CoRA, a gradient-free task-conditioned retrieval framework that uses paired candidate memories to align multi-level frozen input representations with output-derived structure and derive a compact low-rank retrieval basis in closed form, thereby enabling task-conditioned demonstration selection without retriever fine-tuning or target-model interaction.
    
    \item We derive an exact two-pass streaming realization of CoRA and characterize its construction-time working-memory cost. The procedure separates offline output-conditioned index construction from lightweight query-time projection and nearest-neighbor search, supporting local deployment over growing candidate pools.
    
    \item We develop CoRA-M, which extends the framework to multimodal exemplar retrieval by incorporating visual candidate representations into the output-conditioned alignment while retaining the text-layer selection mechanism, enabling a unified retrieval formulation across text-only and visual-language tasks.
    
    \item We evaluate retrieval effectiveness, construction efficiency, and deployment behavior across ten textual datasets, four multimodal benchmarks, and an end-to-end Raspberry Pi~5 retrieval-and-inference pipeline.

\end{itemize}

\paragraph{Organization.}
Section~\ref{sec:rw} reviews related work and clarifies the relationship to our preliminary conference study. Section~\ref{sec:method} presents CoRA's textual and multimodal formulations and analyzes its low-rank retrieval basis. Section~\ref{sec:hardware} derives the streaming index-construction procedure and discusses hardware compatibility. Section~\ref{sec:exp} reports the experimental setup, results, and analyses. Section~\ref{sec:conclusion} concludes the article and discusses its limitations.

\section{Related Work}\label{sec:rw}

\subsection{Exemplar Retrieval for Textual In-Context Learning}

Textual exemplar retrieval builds on lexical and dense relevance matching. BM25~\cite{robertson1995okapi} ranks candidates according to lexical overlap, whereas pretrained-language-model-based dense retrieval represents queries and candidates in a latent semantic space. \citet{zhao2024dense} organize dense retrieval methods along four dimensions: architecture, training, indexing, and integration with other retrieval signals. DHR combines dense lexical and semantic representations for more efficient hybrid retrieval~\cite{lin2023dense}. These methods provide relevance matching, but do not directly model whether a retrieved example is useful as an ICL demonstration.

ICL-specific methods adapt retrieval to demonstration selection. KATE~\cite{liu2022makes} retrieves nearest neighbors using sentence representations, showing that semantically related demonstrations outperform random selection. Skill-KNN~\cite{an2023skill} prompts a language model to describe skills required by candidate inputs and test queries, then retrieves demonstrations using these descriptions. This reduces sensitivity to task-irrelevant surface features without fine-tuning, but requires an additional generation stage and does not use candidate-memory outputs.

Learning-based methods obtain stronger task signals from downstream model behavior. EPR~\cite{rubin2022learning} uses the language-model likelihood of the gold output to label candidate prompts as positive or negative, and trains a dense retriever on the resulting supervision. CEIL~\cite{ye2023compositional} models demonstration selection as a subset problem using a conditional determinantal point process, thereby accounting for both query relevance and interactions among selected examples. UDR~\cite{li2023unified} unifies supervision from heterogeneous tasks through multi-task list-wise ranking and iterative candidate mining. These methods capture richer notions of demonstration utility than fixed input similarity, but require language-model-derived supervision, retriever training, or both, thereby increasing offline data-construction and optimization costs.

\subsection{Exemplar Retrieval for Multimodal In-Context Learning}

Multimodal ICL requires a VLM to process prompts containing interleaved images, textual inputs, and outputs. Flamingo~\cite{alayrac2022flamingo} established the effectiveness of interleaved multimodal demonstrations across vision-language tasks. \citet{doveh2024towards} introduce a curriculum-based ICL instruction-tuning approach that improves a VLM's ability to follow multimodal demonstrations. MMICL~\cite{zhaommicl} develops a multimodal context scheme and instruction-tuning dataset for prompts containing multiple images. These methods primarily improve how the downstream model consumes multimodal demonstrations rather than how a standalone retriever selects them.

Retrieval methods determine which multimodal demonstrations to include in the prompt. MMICES~\cite{chen2025can} first filters candidate demonstrations according to visual similarity and then ranks them using language similarity. TACO~\cite{li2025taco} interprets each demonstration as a local task mapping and configures demonstration sequences that support a coherent global mapping for the query. \citet{chen2025provoking} formulate selection as an exploration-exploitation process, using VLM-derived rewards and policy-gradient optimization to learn demonstration combinations. These approaches demonstrate the value of cross-modal and set-level selection, but depend on fixed fusion strategies, learned selection modules, or repeated downstream-model interaction.

Privacy is an additional concern when visual and textual retrieval data contain user-specific information. \citet{zhang2023privacy} proactively transform cross-modal data into adversarially protected versions before release, causing malicious retrieval models to produce erroneous retrieval results on the protected data. Our setting considers privacy at a different stage of the retrieval pipeline: paired memories and their representations remain on device, and retrieval is performed locally without transferring them to an external service. These approaches respectively protect data intended for release and reduce data exposure through local storage and computation.

\subsection{On-Device Retrieval and Index Efficiency}

Resource-constrained retrieval must account for representation computation, index storage and maintenance, and query-time ranking. Fast-Forward indexes~\cite{leonhardt2024efficient} combine precomputed representations, lightweight encoders, lexical-semantic score interpolation, and index-reduction techniques to support low-latency neural reranking without GPU acceleration. Although developed for document ranking, this separation between offline representation construction and online scoring is also relevant to on-device exemplar retrieval.

On-device retrieval has also been studied in retrieval-augmented generation, where stored representations and generation contexts compete for limited memory. ECG models~\cite{killingback2026unified} use shared representations for document retrieval, context compression, and response generation. Our setting instead retrieves input-output ICL demonstrations and uses stored outputs to condition a frozen input representation space. It must therefore consider task-conditioned index construction in addition to query-time nearest-neighbor search.

\subsection{Relationship to Our Preliminary Conference Work}

Our preliminary conference work~\cite{liu2024unraveling} investigated which similarity signals are acquired by learning-based demonstration retrievers. It formulated and empirically examined two hypotheses:
\begin{itemize}
    \item \textbf{H1:} a learning-based retriever adaptively integrates task-agnostic input similarities encoded at different representational levels;
    \item \textbf{H2:} beyond input similarity, a learning-based retriever implicitly favors demonstrations whose outputs are similar to the unknown output associated with the test query.
\end{itemize}
H1 was examined by comparing the retrieval behavior of different pretrained encoder layers and measuring their CKA similarity to the final representations of pretrained retrievers. H2 was examined by comparing the output similarities of positive and negative training examples and of demonstrations selected by learned and task-agnostic retrievers.

The conference work introduced MLSM and TTF as separate realizations of these hypotheses. MLSM addresses H1 while keeping the underlying BERT encoder frozen. It first computes pairwise CKA scores among encoder layers on an unlabeled subset of the demonstration set, clusters the resulting layer-similarity profiles with $k$-means, and selects the central layer from each cluster as a representative similarity expert. For a query, each selected layer produces a temperature-scaled distribution over a sampled set of candidate demonstrations according to cosine similarity. MLSM then aggregates these distributions with normalized layer weights, which are optimized on a sampled validation set by minimizing an agreement loss between the aggregated and layer-wise distributions, with early stopping. The weights may be estimated for an individual query or updated using a batch of queries. MLSM therefore combines complementary input-side similarities, but neither candidate outputs nor target-task labels enter its retrieval objective.

TTF addresses H2 through task-specific retriever adaptation. It fine-tunes a retriever jointly with an auxiliary prediction module using labeled examples from the target demonstration set. For classification tasks, the auxiliary module is a classification head. The retrieval representation is trained to support label prediction, such that examples associated with the same output are encouraged to be close to the corresponding label-specific classifier representation. For generation tasks, TTF adopts an encoder--decoder architecture, where the decoder supplies sequence-generation supervision and average-pooled final-layer encoder representations are used for retrieval. This supervision jointly shapes the predictive module and the retrieval representation. Thus, TTF injects task-specific input-output relations into the retriever through gradient-based parameter fine-tuning.

Although MLSM and TTF validated the complementary roles of multi-level input similarity and output-related task information, they treated these signals separately. MLSM requires query- or batch-level weight optimization and remains output-agnostic, whereas TTF obtains task specificity through labeled fine-tuning of the retriever. The conference study was also limited to textual retrieval and did not address compact index construction, multimodal conditioning, or end-to-end deployment under edge resource constraints.

\section{Methodology}\label{sec:method}
\subsection{Problem Setup and Preliminaries}
\paragraph{On-device Exemplar Retrieval for ICL}
We consider exemplar retrieval as a local pre-inference operation performed before invoking the downstream model in on-device ICL. Given a test input \(\mathbf{x}^t\) and a local candidate pool \(\mathcal{D} = \{(\mathbf{x}_i^c, \mathbf{y}_i^c)\}_{i=1}^n\) of input-output pairs, the goal is to select a subset \(\mathcal{S}(\mathbf{x}^t) \subseteq \mathcal{D}\) with \(|\mathcal{S}(\mathbf{x}^t)| = k \ll n\) that best supports the downstream large model \(\mathcal{M}\) for the target query. The retrieved exemplars are then combined with \(\mathbf{x}^t\) to form the prompt \(p(\mathbf{x}^t,\mathcal{S})\), which is provided to the downstream model to produce \(\mathbf{y}^t = \mathcal{M}(p(\mathbf{x}^t,\mathcal{S}))\).

The retriever has access to a frozen pretrained encoder \(f\) on device. Here, \(\mathbf{x}\) denotes either a textual input or, more generally, a multimodal instance. We use \emph{gradient-free} to mean that CoRA performs no gradient-based retriever optimization. The method nevertheless conducts task-conditioned adaptation by constructing a retrieval basis in closed-form from paired candidate memory. It does not backpropagate through \(\mathcal{M}\) or invoke the target model during index construction or exemplar selection. All retrieval computations run locally under the available memory and latency budgets. We first present the textual formulation and subsequently extend it to multimodal retrieval.

\paragraph{Layerwise Representations as Retrieval Anchors.}
Probing studies on pretrained transformers have shown that linguistic information is distributed hierarchically across layers, with lower layers capturing surface-level cues, middle layers encoding syntactic regularities, and upper layers emphasizing higher-level semantics~\cite{jawahar2019does, ma2019universal}. Recent analyses of dense retrieval further suggest that retrieval effectiveness is often associated with a task-dependent combination of layerwise similarities, rather than a universal reliance on the final-layer embedding alone~\cite{liu2024unraveling}. Taken together, these observations indicate that intermediate representations can provide complementary retrieval cues, and that the most informative layers may vary across tasks, datasets, and application domains.

Motivated by this evidence, we treat layerwise encoder states as candidate retrieval anchors rather than assuming that the final-layer representation is universally sufficient. For a textual sequence \(\mathbf{s}\), let \(\mathcal{I}(\mathbf{s})\) denote the set of its non-padding token indices. We define its layer-\(\ell\) representation as
\begin{equation}
\mathbf{h}_{\ell}(\mathbf{s})
=
\frac{1}{|\mathcal{I}(\mathbf{s})|}
\sum_{q \in \mathcal{I}(\mathbf{s})}
\mathbf{h}_{\ell,q}(\mathbf{s}),
\end{equation}
where \(\mathbf{h}_{\ell,q}(\mathbf{s})\) is the hidden state of token \(q\) at layer \(\ell\). The same frozen encoder \(f\) is used to encode candidate inputs, candidate outputs, and test inputs. CoRA uses these layerwise representations to construct an output-conditioned retrieval basis without updating the encoder parameters.

\begin{tcolorbox}[width=\linewidth, colback=white!95!black]
\noindent \textbf{Method Desiderata.} 
The above setup and preliminaries suggest three desiderata for our retrieval alignment: \(i\)) task-aware retrieval under frozen-encoder constraints, \(ii\)) effective use of complementary signals across layers, and \(iii\)) minimal additional overhead for on-device deployment. The next subsection presents CoRA, a gradient-free multi-layer retrieval alignment method designed to satisfy these desiderata.
\end{tcolorbox}

\subsection{CoRA: Task-Conditioned Multi-Layer Retrieval Alignment} 
\begin{figure}
    \centering
    \includegraphics[width=\linewidth]{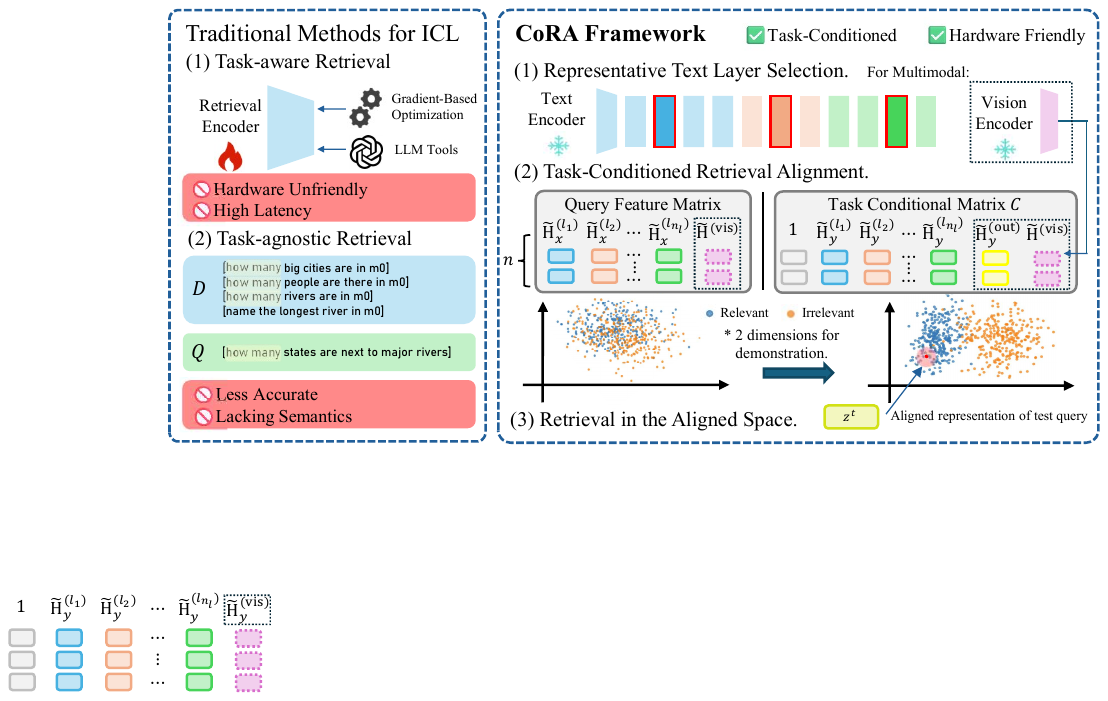}
    \caption{\textbf{Left:} Shortcomings of existing retrieval methods. \textbf{Right:} Overview of CoRA framework, which consists of three main stages. By projecting representations into a task-conditioned subspace without gradient-based parameter updates, it enables effective textual and multimodal retrieval on edge devices.}
    \Description{Research gap and the main schematic diagram.}
    \label{fig:cora}
\end{figure}
\paragraph{Framework Overview.}
As shown in Figure~\ref{fig:cora}, CoRA converts a frozen pretrained encoder into a task-conditioned multi-layer retriever through three stages. First, it applies representative-layer selection to retain complementary input-side features while reducing inter-layer redundancy. Second, it uses the paired candidate outputs to construct a conditioning matrix and fits the selected input representations to this matrix through closed-form ridge regression. A low-rank basis is then estimated from the resulting fitted representation matrix. Third, candidate and query inputs are mapped through this basis and ranked by similarity. Candidate outputs are required only during index construction and an unseen query is encoded using its input alone. The formulation extends to multimodal retrieval, and all computations reduce to lightweight closed-form linear algebra suitable for on-device deployment.

\paragraph{Representative Layer Selection.}
CoRA adopts the representative-layer construction introduced by MLSM in our preliminary conference work~\cite{liu2024unraveling}. This inherited component uses inter-layer similarity to retain complementary input-side representations while reducing the redundancy caused by residual propagation~\cite{dalvi2020analyzing,liu2024unraveling,min2025docs}. CoRA builds on this construction by introducing the output-conditioned alignment described in the following paragraph. Let the frozen retriever encoder \(f\) have \(L\) transformer layers, indexed by \(\ell \in \{1,\dots,L\}\). We draw a small calibration subset $\mathcal{D}_s=\{(\mathbf{x}_i^c,\mathbf{y}_i^c)\}_{i=1}^{n_s}\subset \mathcal{D},$ and use only the inputs \(\{\mathbf{x}_i^c\}_{i=1}^{n_s}\) in this stage to estimate inter-layer redundancy. For each sample \(\mathbf{x}_i^c\) and each layer \(\ell\), we instantiate the layerwise token-average representation $\mathbf{h}_\ell(\mathbf{x}_i^c) \in \mathbb{R}^{d}.$ Stacking these vectors across the subset gives
\begin{equation}
\mathbf{H}^{(\ell)} = [\mathbf{h}_\ell(\mathbf{x}_1^c);\dots;\mathbf{h}_\ell(\mathbf{x}_{n_s}^c)] \in \mathbb{R}^{n_s \times d},
\end{equation}
which summarizes the empirical behavior of layer \(\ell\).

We quantify the similarity between layers using linear Centered Kernel Alignment (CKA)~\cite{kornblith2019similarity}. Specifically, we form a similarity matrix \(\mathbf{S} \in \mathbb{R}^{L \times L}\) with entries
\begin{equation}
\mathbf{S}_{\ell,\ell'} = \mathrm{CKA}(\mathbf{H}^{(\ell)}, \mathbf{H}^{(\ell')}).
\end{equation}
Each row \(\mathbf{S}_{\ell,\cdot}\) represents the similarity profile of layer \(\ell\) with respect to all other layers. Because adjacent layers are often highly similar, selecting layers directly from raw pairwise similarities would favor neighboring and thus redundant layers. To encourage diversity, we treat each row vector \(\mathbf{S}_{\ell,\cdot}\) as a layer feature and cluster \(\{\mathbf{S}_{\ell,\cdot}\}_{\ell=1}^{L}\) into \(n_l\) groups using \(k\)-means. For each cluster \(\mathcal{C}_j\), we then choose as its representative the layer with the largest average similarity to the layers in the same cluster. This yields the representative layer set $\mathcal{L}_{\mathrm{key}} = \{\ell_j\}_{j=1}^{n_l},$
which suppresses redundancy while retaining diverse layerwise retrieval cues for the subsequent task-conditioned alignment stage.

\paragraph{Task-Conditioned Retrieval Alignment.}
Representative layer selection suppresses redundancy, but the resulting features remain task-agnostic with respect to the downstream objective. They reflect the geometry of the frozen encoder rather than the input-output relation induced by the task for which exemplars are retrieved. In on-device ICL, learning this relation with gradient updates, downstream heads, or target-model feedback is infeasible, since the downstream model \(\mathcal{M}\) is invoked only after retrieval and all intermediate computation must remain local. We therefore seek a closed-form alignment mechanism that uses only the local exemplar pairs in \(\mathcal{D}\) while preserving the frozen nature of the encoder. Our key intuition is to identify the components of the input representations that are systematically associated with the output-side structure of the candidate pairs. This relation provides an output-conditioned retrieval space without requiring any update to the encoder parameters.

For each \(\ell_j \in \mathcal{L}_{\mathrm{key}}\), we encode both the candidate inputs and their textual outputs using the same frozen encoder \(f\), apply mean pooling over non-padding token representations, and form
\(\mathbf{H}_x^{(\ell_j)},\,\mathbf{H}_y^{(\ell_j)} \in \mathbb{R}^{n \times d}\),
where the \(i\)-th rows of \(\mathbf{H}_x^{(\ell_j)}\) and \(\mathbf{H}_y^{(\ell_j)}\) are the layer-\(\ell_j\) features of \(\mathbf{x}_i^c\) and \(\mathbf{y}_i^c\), respectively. We standardize each feature dimension via Z-score normalization within each layer and obtain \(\tilde{\mathbf{H}}_x^{(\ell_j)},\, \tilde{\mathbf{H}}_y^{(\ell_j)}.\) To aggregate task signals from exemplar outputs across layers, we construct the conditioning matrix
\begin{equation}
\mathbf{C}
=
\big[\, \mathbf{1} \;\big|\; \tilde{\mathbf{H}}_y^{(\ell_1)} \;\big|\; \dots \;\big|\; \tilde{\mathbf{H}}_y^{(\ell_{n_l})} \,\big]
\in \mathbb{R}^{n \times p},
\end{equation}
where \(\mathbf{1} \in \mathbb{R}^{n}\) is the all-ones column vector. For notational convenience, let
\(d_{\mathrm{cat}}=n_l d\) denote the concatenated input-feature dimension and
\(p=1+d_{\mathrm{cat}}\) the column dimension of \(\mathbf{C}\).

For each selected layer \(\ell_j\), we obtain the component of its input representations fitted from the output-derived conditioning matrix through ridge regression:
\begin{equation}\label{eq_fitting}
\hat{\mathbf{H}}_x^{(\ell_j)} = \mathbf{P}_{\mathbf{C}}\tilde{\mathbf{H}}_x^{(\ell_j)}, \qquad \mathbf{P}_{\mathbf{C}} = \mathbf{C} \left(\mathbf{C}^{\top}\mathbf{C} + \lambda\mathbf{I} \right)^{-1} \mathbf{C}^{\top},
\end{equation}
where \(\lambda>0\) is the ridge regularization coefficient. Here,
\(\mathbf{P}_{\mathbf{C}}\) is the regularized fitting operator
induced by \(\mathbf{C}\). Accordingly, the fitted matrix
\(\hat{\mathbf{H}}_x^{(\ell_j)}\) captures the component of the
layerwise input representations explained by the output-derived
conditioning matrix, while leaving the encoder parameters unchanged.

We then consolidate the fitted components and the corresponding standardized input representations from all representative layers by concatenation
\begin{equation}
\hat{\mathbf{H}}_x
=
\big[\, \hat{\mathbf{H}}_x^{(\ell_1)} \;\big|\; \dots \;\big|\; \hat{\mathbf{H}}_x^{(\ell_{n_l})} \,\big]
\in \mathbb{R}^{n \times d_{\mathrm{cat}}},
\qquad
\tilde{\mathbf{H}}_x
=
\big[\, \tilde{\mathbf{H}}_x^{(\ell_1)}
\;\big|\; \dots \;\big|\;
\tilde{\mathbf{H}}_x^{(\ell_{n_l})} \,\big]
\in \mathbb{R}^{n \times d_{\mathrm{cat}}}.
\end{equation}
Finally, to obtain a compact retrieval space suitable for on-device deployment, we perform a spectral decomposition \(\hat{\mathbf{H}}_x = \mathbf{U}\mathbf{\Sigma}\mathbf{V}^{\top},\)
and retain the top-\(r\) right singular vectors \(\mathbf{V}_{1:r} \in \mathbb{R}^{d_{\mathrm{cat}} \times r}\). The resulting aligned representation is
\begin{equation}\label{eq_decomposition}
\mathbf{Z} = \tilde{\mathbf{H}}_x \mathbf{V}_{1:r} \in \mathbb{R}^{n \times r},
\end{equation}
whose columns span a compact task-conditioned subspace for retrieval. The matrix \(\mathbf{Z}\) forms the compact candidate index used for retrieval. Because \(\mathbf{V}_{1:r}\) is estimated from \(\hat{\mathbf{H}}_x\), the retained feature directions are determined by the dominant structure of the output-conditioned fit, while candidate and query representations are projected from their standardized
input-side features. This also remains fully gradient-free and lightweight enough for on-device deployment.

\paragraph{Retrieval in the Aligned Space.}
CoRA performs retrieval in the learned aligned space. Let \(\bar{\mathbf{Z}}=\mathrm{rownorm}(\mathbf{Z})\) be its row-wise \(\ell_2\)-normalized form. For a test query \(\mathbf{x}^t\), we extract layerwise token-average representations from representative layers \(\mathcal{L}_{\mathrm{key}}\), standardize them using the alignment statistics, and obtain its aligned embedding
\begin{equation}
\mathbf{z}^t
=
\mathrm{norm}\!\left(
\big[
\tilde{\mathbf{h}}_{\ell_1}(\mathbf{x}^t) \;|\; \dots \;|\; \tilde{\mathbf{h}}_{\ell_{n_l}}(\mathbf{x}^t)
\big]
\mathbf{V}_{1:r}
\right)
\in \mathbb{R}^{r},
\end{equation}
where \(\mathrm{norm}(\cdot)\) denotes \(\ell_2\) normalization. Although \(\mathbf{V}_{1:r}\) is learned through target-conditioned alignment, applying it to a test query requires only its input features, since the task signal is already encoded in the basis. We then compute the cosine similarity between \(\mathbf{z}^t\) and each row \(\bar{\mathbf{Z}}_{i,:}\), and select the top-\(k\) exemplars to form the ICL prompt. Because \(\bar{\mathbf{Z}}\) can be precomputed offline, query-time retrieval only involves a lightweight projection and nearest-neighbor search in \(\mathbb{R}^{r}\), making it compatible with standard ANN indexing and the pre-inference on-device setting.

\paragraph{Extension to Multimodal Retrieval.}
Prior work on multimodal ICL suggests that textual content often plays a stronger role than visual content~\cite{baldassini2024makes,chen2025can}. We therefore extend CoRA to multimodal retrieval in a text-centric manner, denoted as \textbf{CoRA-M}. Representative layer selection remains unchanged and is applied only to the textual transformer layers, yielding the same set \(\mathcal{L}_{\mathrm{key}}\). Visual information is introduced through the final pooled representation produced by the frozen vision encoder and multimodal projector.

For a multimodal exemplar \(\mathbf{x}_i^c\), let \(\mathbf{v}(\mathbf{x}_i^c) \in \mathbb{R}^{d}\) denote its projected visual embedding in the shared semantic space. Stacking these embeddings across \(\mathcal{D}\) gives
\begin{equation}
\mathbf{H}_x^{(\mathrm{vis})}
=
[\mathbf{v}(\mathbf{x}_1^c);\dots;\mathbf{v}(\mathbf{x}_n^c)]
\in \mathbb{R}^{n \times d},
\end{equation}
and let \(\tilde{\mathbf{H}}_x^{(\mathrm{vis})}\) be its standardized form. On the target side, unlike the text-only case, we do not concatenate the selected layerwise representations of \(\mathbf{y}_i^c\). Instead, we encode each target response with the frozen text encoder and use its mean-pooled final-layer representation
\begin{equation}
\mathbf{H}_{y}^{(\mathrm{out})}
=
[\mathbf{h}_{L}(\mathbf{y}_1^c);\dots;\mathbf{h}_{L}(\mathbf{y}_n^c)]
\in \mathbb{R}^{n \times d},
\end{equation}
with standardized form \(\tilde{\mathbf{H}}_{y}^{(\mathrm{out})}\). The reason is that, in the multimodal setting, the target response already serves as a compact semantic summary of the answer, while visual grounding is supplied separately by \(\tilde{\mathbf{H}}_x^{(\mathrm{vis})}\). Reintroducing multiple target-side text layers would therefore mainly enlarge the conditioning space and add redundant variation, rather than provide complementary task signal. We thus define
\begin{equation}
\mathbf{C}_{\mathrm{mm}}
=
\big[\, \mathbf{1} \;\big|\; \tilde{\mathbf{H}}_{y}^{(\mathrm{out})} \;\big|\; \tilde{\mathbf{H}}_x^{(\mathrm{vis})} \,\big]
\in \mathbb{R}^{n \times (1 + 2d)}.
\end{equation}
For each selected text layer \(\ell_j \in \mathcal{L}_{\mathrm{key}}\), we compute the target-conditioned component following Eq.~\ref{eq_fitting} with \(\mathbf{C}\) replaced by \(\mathbf{C}_{\mathrm{mm}}\). The multimodal aligned representation is then formed by concatenating the target-conditioned textual blocks with the visual block
\begin{equation}
\hat{\mathbf{H}}_x^{\mathrm{mm}}
=
\big[\, \hat{\mathbf{H}}_{x}^{(\ell_1)} \;\big|\; \dots \;\big|\; \hat{\mathbf{H}}_{x}^{(\ell_{n_l})} \;\big|\; \tilde{\mathbf{H}}_x^{(\mathrm{vis})} \,\big]
\in \mathbb{R}^{n \times (n_l+1)d}.
\end{equation}
We then apply the same spectral decomposition and low-rank projection as in the text-only case (referring to Eq.~\ref{eq_decomposition}). For a test query \(\mathbf{x}^t\), the retrieval embedding becomes
\begin{equation}
\mathbf{z}_{\mathrm{mm}}^t
=
\mathrm{norm}\!\left(
\big[
\tilde{\mathbf{h}}_{\ell_1}(\mathbf{x}^t) \;|\; \dots \;|\; \tilde{\mathbf{h}}_{\ell_{n_l}}(\mathbf{x}^t) \;|\; \tilde{\mathbf{v}}(\mathbf{x}^t)
\big]
\mathbf{V}_{1:r}
\right).
\end{equation}
This extension preserves CoRA's original text-centric retrieval backbone while allowing visual cues to enter the same target-conditioned subspace. The subsequent low-rank projection can therefore adapt the combined text-visual representation for retrieval while retaining a unified representation space across the two modalities.

\subsection{Analytical Characterization of the Low-Rank Basis}
\label{s33_theory}

We characterize the rank-constrained compression performed in CoRA's
basis-estimation step. Recall that
\[
\hat{\mathbf{H}}_x
=
\mathbf{P}_{\mathbf{C}}\tilde{\mathbf{H}}_x
\in \mathbb{R}^{n\times d_{\mathrm{cat}}}
\]
is the representation matrix fitted from the output-derived conditioning
matrix. Consider its full singular value decomposition
\[
\hat{\mathbf{H}}_x
=
\mathbf{U}\mathbf{\Sigma}\mathbf{V}^{\top},
\]
where \(\mathbf{V}\in\mathbb{R}^{d_{\mathrm{cat}}\times d_{\mathrm{cat}}}\)
is orthogonal and
\(\sigma_1\geq\cdots\geq\sigma_{d_{\mathrm{cat}}}\geq 0\)
are the singular values, padded with zeros when necessary. Let
\(\mathbf{V}_{1:r}\) contain the first \(r\) right singular vectors.

\begin{proposition}[Optimal Rank-\(r\) Compression of the Output-Conditioned Fit]
\label{prop:conditional_basis}
For \(1\leq r\leq d_{\mathrm{cat}}\),
\(\mathbf{V}_{1:r}\) is an optimal solution to
\begin{equation}
\max_{\substack{
\mathbf{W}\in\mathbb{R}^{d_{\mathrm{cat}}\times r}\\
\mathbf{W}^{\top}\mathbf{W}=\mathbf{I}_r}}
\left\|\hat{\mathbf{H}}_x\mathbf{W}\right\|_F^2,
\end{equation}
and the optimal value is
\begin{equation}
\left\|\hat{\mathbf{H}}_x\mathbf{V}_{1:r}\right\|_F^2
=
\sum_{i=1}^{r}\sigma_i^2.
\label{eq:conditional_energy_bound}
\end{equation}
Equivalently, \(\mathbf{V}_{1:r}\) solves
\begin{equation}
\min_{\substack{
\mathbf{W}\in\mathbb{R}^{d_{\mathrm{cat}}\times r}\\
\mathbf{W}^{\top}\mathbf{W}=\mathbf{I}_r}}
\left\|
\hat{\mathbf{H}}_x-
\hat{\mathbf{H}}_x\mathbf{W}\mathbf{W}^{\top}
\right\|_F^2,
\end{equation}
with minimum value
\(\sum_{i=r+1}^{d_{\mathrm{cat}}}\sigma_i^2\).
\end{proposition}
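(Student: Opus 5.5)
The plan is to reduce everything to the eigen-decomposition of the Gram matrix \(\mathbf{G}=\hat{\mathbf{H}}_x^{\top}\hat{\mathbf{H}}_x=\mathbf{V}\mathbf{\Sigma}^{\top}\mathbf{\Sigma}\mathbf{V}^{\top}\), whose eigenvalues are \(\sigma_1^2\geq\cdots\geq\sigma_{d_{\mathrm{cat}}}^2\geq 0\) (padded with zeros as in the statement). For any \(\mathbf{W}\) with orthonormal columns we have \(\|\hat{\mathbf{H}}_x\mathbf{W}\|_F^2=\operatorname{tr}(\mathbf{W}^{\top}\mathbf{G}\mathbf{W})\). The maximization is therefore a trace-maximization over the Stiefel manifold, which is Ky Fan's maximum principle. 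I will give a self-contained argument rather than cite it.

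First, I set \(\mathbf{Q}=\mathbf{V}^{\top}\mathbf{W}\), which again satisfies \(\mathbf{Q}^{\top}\mathbf{Q}=\mathbf{I}_r\) because \(\mathbf{V}\) is orthogonal. Then
\[
\operatorname{tr}(\mathbf{W}^{\top}\mathbf{G}\mathbf{W})=\sum_{i=1}^{d_{\mathrm{cat}}}\sigma_i^2\,c_i,
\qquad c_i=\|\mathbf{Q}_{i,:}\|_2^2 .
\]
The weights satisfy \(0\le c_i\le 1\), since the rows of a matrix with orthonormal columns have norm at most one (it extends to an orthogonal matrix). They also satisfy \(\sum_i c_i=\|\mathbf{Q}\|_F^2=r\).

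The main step, and the only nontrivial one, is the elementary linear-programming bound. Maximizing \(\sum_i\sigma_i^2 c_i\) subject to \(0\le c_i\le 1\) and \(\sum_i c_i=r\) with sorted coefficients is attained by putting weight \(1\) on the top \(r\) indices. To show this, write
\[
\sum_i\sigma_i^2c_i-\sum_{i\le r}\sigma_i^2=\sum_{i\le r}\sigma_i^2(c_i-1)+\sum_{i>r}\sigma_i^2c_i\le\sigma_r^2\Big(\sum_{i\le r}(c_i-1)+\sum_{i>r}c_i\Big)=0,
\]
using \(c_i-1\le 0\) for \(i\le r\) and \(\sigma_i^2\le\sigma_r^2\) for \(i>r\). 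This gives the upper bound \(\sum_{i\le r}\sigma_i^2\).

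For attainment, substitute \(\mathbf{W}=\mathbf{V}_{1:r}\). Then \(\hat{\mathbf{H}}_x\mathbf{V}_{1:r}=\mathbf{U}\mathbf{\Sigma}\mathbf{V}^{\top}\mathbf{V}_{1:r}\), whose Frobenius norm squared is exactly \(\sum_{i\le r}\sigma_i^2\). This establishes both optimality and the stated value in Eq.~\eqref{eq:conditional_energy_bound}.

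For the equivalent minimization, I will use that \(\mathbf{W}\mathbf{W}^{\top}\) is an orthogonal projector, so by Pythagoras
\[
\|\hat{\mathbf{H}}_x-\hat{\mathbf{H}}_x\mathbf{W}\mathbf{W}^{\top}\|_F^2=\|\hat{\mathbf{H}}_x\|_F^2-\|\hat{\mathbf{H}}_x\mathbf{W}\|_F^2 .
\]
Concretely, expand the trace of \((\mathbf{I}-\mathbf{W}\mathbf{W}^{\top})\mathbf{G}(\mathbf{I}-\mathbf{W}\mathbf{W}^{\top})\), using idempotence together with \(\operatorname{tr}(\mathbf{W}\mathbf{W}^{\top}\mathbf{G})=\operatorname{tr}(\mathbf{W}^{\top}\mathbf{G}\mathbf{W})\). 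Since \(\|\hat{\mathbf{H}}_x\|_F^2=\sum_{i=1}^{d_{\mathrm{cat}}}\sigma_i^2\) is constant in \(\mathbf{W}\), minimizing the residual is the same as maximizing the captured energy. It follows that \(\mathbf{V}_{1:r}\) is a minimizer with value \(\sum_{i>r}\sigma_i^2\).

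I expect no real obstacle. The one point needing care is the bound \(c_i\le 1\), which I will justify by completing \(\mathbf{Q}\) to an orthogonal matrix. The other is the zero-padding convention when \(\operatorname{rank}\hat{\mathbf{H}}_x<r\) or \(n<d_{\mathrm{cat}}\), which the argument handles automatically.
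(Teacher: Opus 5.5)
Your proposal is correct and follows the paper's proof essentially step for step: the substitution $\mathbf{Q}=\mathbf{V}^{\top}\mathbf{W}$, the row-norm weights $c_i\in[0,1]$ summing to $r$, attainment at $\mathbf{V}_{1:r}$, and the Pythagorean identity that turns the reconstruction problem into the energy-maximization problem. You also make explicit two steps the paper only asserts, namely the exchange bound via $\sigma_r^2$ and the orthogonal-completion argument for $c_i\le 1$; be aware that your use of $\mathbf{G}$ for $\hat{\mathbf{H}}_x^{\top}\hat{\mathbf{H}}_x$ clashes with the paper's $\mathbf{G}=\mathbf{C}^{\top}\mathbf{C}$.
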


\begin{proof}
For any feasible \(\mathbf{W}\), define
\(\mathbf{Q}=\mathbf{V}^{\top}\mathbf{W}\). Since
\(\mathbf{V}\) is orthogonal and
\(\mathbf{W}^{\top}\mathbf{W}=\mathbf{I}_r\), we have
\(\mathbf{Q}^{\top}\mathbf{Q}=\mathbf{I}_r\). Using the SVD of
\(\hat{\mathbf{H}}_x\), the retained fitted energy is
\[
\left\|\hat{\mathbf{H}}_x\mathbf{W}\right\|_F^2 =
\left\|\mathbf{\Sigma}\mathbf{Q}\right\|_F^2 =
\sum_{i=1}^{d_{\mathrm{cat}}}
\sigma_i^2
\left\|\mathbf{q}_{i,:}\right\|_2^2,
\]
where \(\mathbf{q}_{i,:}\) is the \(i\)-th row of
\(\mathbf{Q}\). The column orthonormality of \(\mathbf{Q}\) implies
\[
0\leq\left\|\mathbf{q}_{i,:}\right\|_2^2\leq 1,
\qquad
\sum_{i=1}^{d_{\mathrm{cat}}}
\left\|\mathbf{q}_{i,:}\right\|_2^2=r.
\]
Because
\(\sigma_1^2\geq\cdots\geq\sigma_{d_{\mathrm{cat}}}^2\),
the weighted sum is maximized by assigning unit weight to the first
\(r\) singular directions and zero weight to the remaining directions.
Therefore,
\[
\left\|\hat{\mathbf{H}}_x\mathbf{W}\right\|_F^2
\leq
\sum_{i=1}^{r}\sigma_i^2,
\]
with equality attained by
\(\mathbf{W}=\mathbf{V}_{1:r}\).

For the equivalent reconstruction objective,
\(\mathbf{W}\mathbf{W}^{\top}\) is an orthogonal projector, and hence
\[
\left\|
\hat{\mathbf{H}}_x
-
\hat{\mathbf{H}}_x\mathbf{W}\mathbf{W}^{\top}
\right\|_F^2
=
\left\|\hat{\mathbf{H}}_x\right\|_F^2
-
\left\|\hat{\mathbf{H}}_x\mathbf{W}\right\|_F^2.
\]
Thus, minimizing the reconstruction error is equivalent to maximizing
the retained fitted energy. Substituting
\(\mathbf{W}=\mathbf{V}_{1:r}\) and using
\(\|\hat{\mathbf{H}}_x\|_F^2
=\sum_i\sigma_i^2\)
gives the minimum reconstruction error
\(\sum_{i=r+1}^{d_{\mathrm{cat}}}\sigma_i^2\).
\end{proof}

\noindent\textbf{Interpretation.}
Proposition~\ref{prop:conditional_basis} characterizes the compression step used to estimate CoRA's retrieval basis. The fitted matrix \(\hat{\mathbf{H}}_x\) summarizes input-side variation associated with the output-derived conditioning matrix, and its leading right singular subspace provides the most faithful \(r\)-dimensional representation of this fitted structure. The trailing spectral energy \(\sum_{i=r+1}^{d_{\mathrm{cat}}}\sigma_i^2\) is the exact approximation error under the rank constraint.

For any \(r\)-dimensional orthogonal basis \(\mathbf{W}\), define the normalized retained fitted energy as 
\begin{equation}
\rho_r(\mathbf{W}) = \frac{\left\|\hat{\mathbf{H}}_x\mathbf{W}\right\|_F^2}{\left\|\hat{\mathbf{H}}_x\right\|_F^2}.
\label{eq:retained_fitted_energy}
\end{equation}
Proposition~\ref{prop:conditional_basis} implies that CoRA's basis satisfies
\begin{equation}
\rho_r(\mathbf{V}_{1:r}) = \max_{\mathbf{W}^{\top}\mathbf{W}=\mathbf{I}_r} \rho_r(\mathbf{W}) = \frac{\sum_{i=1}^{r}\sigma_i^2}{\sum_{i=1}^{d_{\mathrm{cat}}}\sigma_i^2}.
\label{eq:optimal_retained_fitted_energy}
\end{equation}
Section~\ref{sec:abls} compares this quantity across alternative bases under the same rank budget and examines its relation to downstream retrieval performance.

\section{On-Device Realization of the Retrieval Pipeline}
\label{sec:hardware}
\subsection{On-Device Streaming Construction of the Retrieval Representation}

We show that the final retrieval representation $\mathbf{Z}$ can be computed in two sequential passes over the exemplar pool without materializing the full fitted matrix $\hat{\mathbf{H}}_x$. Collect the layerwise ridge-regression coefficients as
\begin{equation}
\mathbf{B}
=
\big[
\mathbf{B}^{(\ell_1)}
\;\big|\; \cdots \;\big|\;
\mathbf{B}^{(\ell_{n_l})}
\big]
\in \mathbb{R}^{p\times d_{\mathrm{cat}}}.
\end{equation}
The layerwise regularized regressions can then be written jointly as
\begin{equation}
  \mathbf{B}
  =
  \arg\min_{\mathbf{B}}
  \left\|
    \tilde{\mathbf{H}}_x - \mathbf{C}\mathbf{B}
  \right\|_F^2 + \lambda \|\mathbf{B}\|_F^2,
  \qquad
  \hat{\mathbf{H}}_x = \mathbf{C}\mathbf{B}.
\end{equation}

\paragraph{Pass~1: Sufficient Statistics and Global Solve.}
Partition $\mathcal{D}$ into chunks $\{\mathcal{D}_b\}_{b=1}^{m}$, and let $\mathbf{C}_b \in \mathbb{R}^{|\mathcal{D}_b| \times p}$ and $\tilde{\mathbf{H}}_{x,b} \in \mathbb{R}^{|\mathcal{D}_b| \times d_{\mathrm{cat}}}$ denote the corresponding row blocks of $\mathbf{C}$ and $\tilde{\mathbf{H}}_x$. A first streaming pass accumulates the two sufficient-statistic matrices
\begin{equation}
  \mathbf{G}
  =
  \sum_{b=1}^{m} \mathbf{C}_b^{\top}\mathbf{C}_b
  \in \mathbb{R}^{p \times p},
  \qquad
  \mathbf{T}
  =
  \sum_{b=1}^{m} \mathbf{C}_b^{\top}\tilde{\mathbf{H}}_{x,b}
  \in \mathbb{R}^{p \times d_{\mathrm{cat}}}.
\end{equation}
After the first pass, \(\mathbf{B}\) is obtained by solving
\begin{equation}
(\mathbf{G}+\lambda\mathbf{I}_p)\mathbf{B}
=
\mathbf{T}.
\end{equation}
The dimensions of this system depend on \(p\) and \(d_{\mathrm{cat}}\), but not on the number of exemplars \(n\). The low-rank basis required for Eq.~\ref{eq_decomposition} can likewise be obtained without forming $\hat{\mathbf{H}}_x$. Since $\hat{\mathbf{H}}_x = \mathbf{C}\mathbf{B}$,
\begin{equation}
  \hat{\mathbf{H}}_x^{\top}\hat{\mathbf{H}}_x
  =
  (\mathbf{C}\mathbf{B})^{\top}(\mathbf{C}\mathbf{B})
  =
  \mathbf{B}^{\top}\mathbf{G}\,\mathbf{B}
  \in \mathbb{R}^{d_{\mathrm{cat}} \times d_{\mathrm{cat}}}.
\end{equation}
The top-$r$ right singular vectors $\mathbf{V}_{1:r}$ are therefore the leading eigenvectors of $\mathbf{B}^{\top}\mathbf{G}\,\mathbf{B}$, a $d_{\mathrm{cat}} \times d_{\mathrm{cat}}$ matrix whose size is independent of $n$.

\paragraph{Pass~2: Chunkwise Index Construction.}
After obtaining \(\mathbf{V}_{1:r}\), a second sequential pass re-encodes each chunk, applies the input-side standardization statistics, and directly constructs its retrieval vectors as
\begin{equation}
\mathbf{Z}_b
=
\tilde{\mathbf{H}}_{x,b}\mathbf{V}_{1:r}
\in \mathbb{R}^{|\mathcal{D}_b| \times r}.
\end{equation}
Concatenating the chunkwise results yields
\(\mathbf{Z}\in\mathbb{R}^{n\times r}\), exactly matching the index representation defined in Eq.~\ref{eq_decomposition}.

\paragraph{Peak Memory Footprint.}
During the first pass, CoRA retains only one pair of feature blocks \(\mathbf{C}_b\) and \(\tilde{\mathbf{H}}_{x,b}\), together with the sufficient-statistic matrices \(\mathbf{G}\) and \(\mathbf{T}\). The global solve additionally produces the coefficient matrix \(\mathbf{B}\) and the \(d_{\mathrm{cat}}\times d_{\mathrm{cat}}\) matrix \(\mathbf{B}^{\top}\mathbf{G}\mathbf{B}\), from which the basis \(\mathbf{V}_{1:r}\) is obtained. During the second pass, each input-feature block is projected using \(\mathbf{V}_{1:r}\) and written incrementally to the retrieval index. Excluding the persistent storage of the resulting index, the construction-time working memory is therefore bounded by
\begin{equation}\label{eq_overhead}
\mathcal{O}\!\left(
|\mathcal{D}_b|(p+d_{\mathrm{cat}})
+p(p+d_{\mathrm{cat}})
+d_{\mathrm{cat}}^2
\right).
\end{equation}
For fixed \(|\mathcal{D}_b|\), \(p\), and \(d_{\mathrm{cat}}\), this working-memory bound is independent of the number of exemplars \(n\). The resulting retrieval index \(\mathbf{Z}\in\mathbb{R}^{n\times r}\) requires a separate \(\mathcal{O}(nr)\) storage term. The two-pass construction thus trades an additional sequential scan for working memory controlled by the chunk size and feature dimensions. Representative-layer selection further reduces this cost by keeping \(n_l\), and hence \(d_{\mathrm{cat}}\), small. Section~\ref{sec:exp_sys} empirically evaluates the resulting memory scaling as the candidate pool grows.

\paragraph{Extension to CoRA-M.}
The identical streaming construction applies to the multimodal variant CoRA-M by replacing $\mathbf{C}$ with $\mathbf{C}_{\mathrm{mm}} \in \mathbb{R}^{n \times (1+2d)}$ and $\tilde{\mathbf{H}}_x$ with the multimodal concatenation
$[\,\tilde{\mathbf{H}}_x^{(\ell_1)} \;|\; \cdots \;|\;
   \tilde{\mathbf{H}}_x^{(\ell_{n_l})} \;|\;
   \tilde{\mathbf{H}}_x^{(\mathrm{vis})}\,]
\in \mathbb{R}^{n \times (n_l+1)d}$.
This changes $d_{\mathrm{cat}}$ to $(n_l+1)d$ and $p$ to $1 + 2d$, and no other changes to the streaming procedure are required.

\subsection{Compatibility with Existing and Emerging Hardware.}
Because the entire CoRA pipeline reduces to standard dense linear algebra primitives, matrix multiplications, a linear-system solve, an eigen-decomposition, and an inner-product-based nearest-neighbor search, it can be implemented on conventional CPU-based edge platforms using highly optimized numerical libraries such as BLAS/LAPACK or their embedded variants~\cite{laug}. Looking ahead, recent work on in-memory computing (IMC) architectures has demonstrated hardware support for each of these primitives individually, including matrix inversion for linear regression~\cite{zuo2025precise}, low-rank SVD kernels~\cite{mannocci2023memory}, and inner-product-based $k$-nearest-neighbor~\cite{nie2025pick} or approximate-nearest-neighbor search~\cite{liu2025seim}. Since CoRA is composed solely of such operations, it is in principle compatible with IMC-style accelerators and could benefit from their advantages in latency and energy efficiency.

\section{Experiment}\label{sec:exp}
\subsection{Experimental Setup}
\paragraph{Tasks and Datasets.}
Following our conference study~\cite{liu2024unraveling} and prior work~\cite{ye2023compositional}, we evaluate CoRA on the same textual benchmark suite of ten publicly available datasets, comprising five classification datasets and five generation datasets across seven task categories. For each dataset, we construct a candidate pool $\mathcal{D}=\{(\mathbf{x}_i^c,\mathbf{y}_i^c)\}_{i=1}^{n}$ and a disjoint evaluation set $\mathcal{T}=\{(\mathbf{x}_j^t,\mathbf{y}_j^t)\}_{j=1}^{m}$. Given a test input $\mathbf{x}^t$, the retriever selects $\mathcal{S}(\mathbf{x}^t)\subseteq\mathcal{D}$ with $|\mathcal{S}(\mathbf{x}^t)|=k$ to construct the in-context prompt. The evaluation target $\mathbf{y}^t$ is used only to compute the downstream metric and is never accessed during retrieval or alignment. The candidate and evaluation sets contain no overlapping instances. Table~\ref{tab:datasets} summarizes the task type, split, candidate-pool size, evaluation-set size, and metric for each dataset.

\begin{table}[t]
\centering
\caption{Summary of datasets and tasks. Each dataset is divided into a local candidate pool $\mathcal{D} = \{(\mathbf{x}_i^c, \mathbf{y}_i^c)\}_{i=1}^n$ and an evaluation query set $\mathcal{T} = \{\mathbf{x}_j^t\}_{j=1}^m$. Given a test input $\mathbf{x}^t$, the retriever selects $\mathcal{S}(\mathbf{x}^t)\subseteq\mathcal{D}$ with $|\mathcal{S}(\mathbf{x}^t)|=k$ to construct the in-context prompt.}
\label{tab:datasets}
\begin{tabular}{lccc}
\toprule
\multicolumn{4}{c}{\textbf{Classification Tasks} \quad Metric: Accuracy (Acc.) $\uparrow$} \\
\midrule
Dataset & Task Type & Corpus Size & Test Queries \\
\midrule
SST-5~\cite{socher2013recursive} & Sentiment Classification & 8534 & 1101 \\
MRPC~\cite{dolan2004unsupervised} & Paraphrase Detection & 3668 & 408 \\
QNLI~\cite{wang-etal-2018-glue} & Natural Language Inference & 104707 & 5463 \\
CMSQA~\cite{talmor2019commonsenseqa} & Commonsense Reasoning & 9740 & 1221 \\
HellaSwag~\cite{zellers2019hellaswag} & Commonsense Reasoning & 52611 & 20006 \\
\midrule
\multicolumn{4}{c}{\textbf{Generation Tasks} \quad Metric: Exact Match (EM) $\uparrow$} \\
Dataset & Task Type & Corpus Size & Test Queries \\
\midrule
WebQs~\cite{berant2013semantic} & Open-Domain QA & 3778 & 2032 \\
GeoQuery~\cite{zelle1996learning} & Code Generation & 404 & 208 \\
Nl2Bash~\cite{lin2018nl2bash} & Code Generation & 7441 & 609 \\
MTOP~\cite{li2021mtop} & Semantic Parsing & 15564 & 2235 \\
SMCalFlow~\cite{andreas2020task} & Semantic Parsing & 102491 & 14751 \\
\bottomrule
\end{tabular}
\end{table}

\paragraph{Baselines.}
We compare CoRA with representative retrieval methods that are feasible for local or pre-inference deployment under limited on-device computational budgets. The baselines include: \textbf{Random}, which samples $k$ exemplars from the candidate pool; \textbf{Top-$k$ BM25}, which retrieves exemplars using BM25~\cite{robertson1995okapi}; \textbf{Top-$k$ SBERT}, which replaces BERT with sentence-BERT~\cite{reimers2019sentence}; \textbf{Top-$k$ BERT}, which ranks exemplars by the cosine similarity between their average final-layer BERT embeddings and the query~\cite{devlin2019bert}; \textbf{DPP-BERT}, which applies MAP inference with a determinantal point process on Top-$k$ BERT candidates for diverse subset selection~\cite{chen2018fast}; \textbf{Top-$k$ Qwen3-Embedding}, which ranks exemplars using cosine similarity between embeddings produced by the dedicated embedding model Qwen3-Embedding-0.6B~\cite{qwen3embedding}; and our previously proposed lightweight gradient-based retriever, \textbf{MLSM}~\cite{liu2024unraveling}, which combines layerwise BERT similarities by optimizing their agreement. We additionally compare CoRA with the training-based retrievers EPR~\cite{rubin2022learning}, CEIL~\cite{ye2023compositional}, and TTF~\cite{liu2024unraveling} in Section~\ref{sec:exp_sys}, where their task-adaptation cost is included in the efficiency comparison.

\paragraph{Implementation Details.}
Following the textual retrieval configuration established in our conference study~\cite{liu2024unraveling}, we use BERT~\cite{devlin2019bert} as the frozen retrieval encoder, select $n_l{=}3$ representative layers from a calibration subset of $n_s{=}1000$ candidate inputs, and retrieve $k{=}20$ exemplars for each query. We keep these shared settings fixed to enable direct comparison with MLSM and focus the evaluation on the task-conditioned alignment introduced by CoRA. We set the ridge regularization coefficient to $\lambda{=}10^{-6}$, and retain a conditional subspace of dimension $r{=}256$. Input and output representations are standardized independently for each selected layer using statistics computed from the candidate pool. For generation datasets, the reference response associated with each candidate is used to construct its output representation; evaluation responses are not used by the retriever. The downstream language models are instruction-tuned \textit{Llama-3.2-1B}~\cite{grattafiori2024llama}, 4-bit-quantized \textit{MobileLLM-Pro}~\cite{huber2025mobilellm}, and \textit{Qwen3.5-2B}~\cite{qwen3.5}. All retrieval methods use the same candidate pools, query sets, number of retrieved exemplars, prompt templates, exemplar ordering rule, and downstream decoding configuration. We report accuracy for classification and exact match for generation.

\paragraph{Multimodal Extension Setup.}
We evaluate CoRA-M on four multimodal exemplar-retrieval benchmarks: VQAv2~\cite{goyal2017making}, OKVQA~\cite{schwenk2022okvqa}, VizWiz~\cite{gurari2018vizwiz}, and MSCOCO~\cite{chen2015microsoft}. The first three evaluate visual question answering, whereas MSCOCO evaluates image captioning. For each benchmark, the candidate pool and evaluation set are disjoint, and evaluation targets are withheld from all retrieval procedures. CoRA-M uses CLIP~\cite{radford2021learning} to obtain the textual and visual representations.

Following established multimodal ICL protocols~\cite{baldassini2024makes,chen2025can}, we compare CoRA-M with \textbf{Random}, \textbf{RICES-image}, \textbf{RICES-text}~\cite{alayrac2022flamingo}, and \textbf{MMICES}~\cite{chen2025can}. The downstream VLMs are \textit{OpenFlamingo-3B}~\cite{awadalla2023openflamingo} and \textit{Qwen3.5-2B}~\cite{qwen3.5}. All methods use the same candidate pools, number and order of demonstrations, prompt templates, and decoding configuration. We report the standard VQA accuracy for VQAv2, OKVQA, and VizWiz, and CIDEr for MSCOCO.

\subsection{Retrieval Effectiveness}
\begin{table}[t]
\caption{Main results on five textual classification benchmarks across three language models. ``\ding{61}'' indicates that gradient computation is required. Best score for each task is boldfaced, and Avg. denotes the average score across all five tasks.}
\label{tab:main-classification}
\centering
\begin{tabular}{l ccccc >{\columncolor{blue!6}}c}
\toprule
Classification & SST-5 & MRPC & QNLI & CMSQA & SWAG & \textbf{Avg.} \\
\midrule
\multicolumn{7}{c}{Llama-3.2-1B} \\
\midrule
Random                & 0.3370 & 0.6912 & 0.5931 & 0.6544 & 0.6563 & 0.5864 \\
Top-$k$ BM25          & 0.3824 & 0.6961 & 0.6434 & 0.6495 & 0.6744 & 0.6091 \\
Top-$k$ SBERT         & 0.3787 & 0.7328 & 0.6354 & 0.6609 & 0.6831 & 0.6182 \\
Top-$k$ BERT          & 0.3878 & 0.7181 & 0.6577 & 0.6749 & 0.6828 & 0.6243 \\
DPP-BERT              & 0.3733 & 0.6961 & 0.6755 & 0.6740 & 0.6868 & 0.6211 \\
Top-$k$ Qwen3-Emb.    & 0.3987 & 0.7206 & 0.6357 & 0.6667 & \textbf{0.6902} & 0.6224 \\
MLSM\textsuperscript{\ding{61}} & 0.4024 & 0.7230 & 0.6822 & 0.6593 & 0.6850 & 0.6304 \\
\rowcolor{pink!28}
CoRA (ours)           & \textbf{0.4134{\scriptsize$\pm$0.02}} & \textbf{0.7333{\scriptsize$\pm$0.05}} & \textbf{0.7317{\scriptsize$\pm$0.02}} & \textbf{0.6753{\scriptsize$\pm$0.02}} & 0.6892{\scriptsize$\pm$0.01} & \cellcolor{pink!28}\textbf{0.6486{\scriptsize$\pm$0.02}} \\
\midrule
\multicolumn{7}{c}{MobileLLM-Pro} \\
\midrule
Random                & 0.4187 & 0.5662 & 0.5872 & 0.6814 & 0.6471 & 0.5801 \\
Top-$k$ BM25          & 0.4668 & 0.6299 & 0.6672 & 0.6585 & 0.6614 & 0.6168 \\
Top-$k$ SBERT         & 0.4469 & 0.6078 & 0.6194 & 0.6650 & 0.6678 & 0.6014 \\
Top-$k$ BERT          & 0.4432 & 0.6275 & 0.6498 & 0.6683 & 0.6681 & 0.6114 \\
DPP-BERT              & 0.4587 & 0.6275 & 0.6863 & 0.6699 & 0.6724 & 0.6229 \\
Top-$k$ Qwen3-Emb.    & 0.4614 & 0.6348 & 0.6273 & 0.6773 & \textbf{0.6735} & 0.6149 \\
MLSM\textsuperscript{\ding{61}} & 0.4614 & 0.6176 & 0.6365 & 0.6773 & 0.6688 & 0.6123 \\
\rowcolor{pink!28}
CoRA (ours)           & \textbf{0.4834{\scriptsize$\pm$0.01}} & \textbf{0.6495{\scriptsize$\pm$0.02}} & \textbf{0.7272{\scriptsize$\pm$0.01}} & \textbf{0.6825{\scriptsize$\pm$0.02}} & 0.6725{\scriptsize$\pm$0.01} & \cellcolor{pink!28}\textbf{0.6428{\scriptsize$\pm$0.01}} \\
\midrule
\multicolumn{7}{c}{Qwen3.5-2B} \\
\midrule
Random                & 0.3524 & 0.6912 & 0.7203 & 0.6822 & 0.6590 & 0.6210 \\
Top-$k$ BM25          & 0.4205 & 0.7328 & 0.7344 & 0.6257 & 0.6790 & 0.6385 \\
Top-$k$ SBERT         & 0.4051 & 0.7132 & 0.7434 & 0.6454 & 0.6888 & 0.6392 \\
Top-$k$ BERT          & 0.4133 & 0.7598 & 0.7617 & 0.6683 & 0.6857 & 0.6577 \\
DPP-BERT              & 0.4196 & 0.7157 & 0.7545 & 0.6585 & 0.6920 & 0.6481 \\
Top-$k$ Qwen3-Emb.    & 0.4233 & 0.7279 & 0.7463 & 0.6626 & 0.6954 & 0.6511 \\
MLSM\textsuperscript{\ding{61}} & 0.4033 & 0.7451 & 0.7706 & 0.6839 & 0.6873 & 0.6580 \\
\rowcolor{pink!28}
CoRA (ours)           & \textbf{0.4352{\scriptsize$\pm$0.02}} & \textbf{0.7774{\scriptsize$\pm$0.04}} & \textbf{0.8035{\scriptsize$\pm$0.03}} & \textbf{0.6924{\scriptsize$\pm$0.02}} & \textbf{0.7019{\scriptsize$\pm$0.01}} & \cellcolor{pink!28}\textbf{0.6821{\scriptsize$\pm$0.03}} \\
\bottomrule
\end{tabular}
\end{table}

\begin{table}[t]
\caption{Main results on five textual generation benchmarks across three language models. ``\ding{61}'' indicates that gradient computation is required. Best score for each task is boldfaced, and Avg. denotes the average score across all five tasks.}
\label{tab:main-generation}
\centering
\begin{tabular}{l ccccc >{\columncolor{blue!6}}c}
\toprule
Generation & WebQs & GeoQ. & NL2B. & MTOP & SMCal. & \textbf{Avg.} \\
\midrule
\multicolumn{7}{c}{Llama-3.2-1B} \\
\midrule
Random                & 0.1654 & 0.1536 & 0.1396 & 0.0515 & 0.0623 & 0.1145 \\
Top-$k$ BM25          & 0.3135 & 0.4357 & 0.3346 & 0.5074 & 0.4274 & 0.4037 \\
Top-$k$ SBERT         & 0.3209 & 0.4679 & 0.3280 & 0.4559 & 0.4154 & 0.3976 \\
Top-$k$ BERT          & 0.3204 & 0.5000 & 0.3354 & 0.5302 & 0.4551 & 0.4282 \\
DPP-BERT              & 0.3194 & 0.5214 & \textbf{0.3481} & 0.5177 & 0.4336 & 0.4280 \\
Top-$k$ Qwen3-Emb.    & 0.3140 & 0.4929 & 0.3330 & 0.4537 & 0.4196 & 0.4026 \\
MLSM\textsuperscript{\ding{61}} & 0.3243 & 0.4571 & 0.3194 & 0.5289 & 0.4747 & 0.4209 \\
\rowcolor{pink!28}
CoRA (ours)           & \textbf{0.3334{\scriptsize$\pm$0.03}} & \textbf{0.5714{\scriptsize$\pm$0.03}} & 0.3248{\scriptsize$\pm$0.04} & \textbf{0.5349{\scriptsize$\pm$0.05}} & \textbf{0.4854{\scriptsize$\pm$0.01}} & \cellcolor{pink!28}\textbf{0.4500{\scriptsize$\pm$0.04}} \\
\midrule
\multicolumn{7}{c}{MobileLLM-Pro} \\
\midrule
Random                & 0.1191 & 0.1321 & 0.1445 & 0.0157 & 0.0140 & 0.0851 \\
Top-$k$ BM25          & 0.2180 & 0.5071 & 0.2697 & 0.4179 & 0.3836 & 0.3593 \\
Top-$k$ SBERT         & 0.2116 & 0.5607 & \textbf{0.3498} & 0.3638 & 0.3796 & 0.3731 \\
Top-$k$ BERT          & 0.2180 & 0.5286 & 0.2866 & 0.4327 & 0.3992 & 0.3730 \\
DPP-BERT              & 0.2028 & 0.4071 & 0.2972 & 0.4358 & 0.3884 & 0.3463 \\
Top-$k$ Qwen3-Emb.    & 0.2195 & 0.5357 & 0.2882 & 0.3570 & 0.3847 & 0.3570 \\
MLSM\textsuperscript{\ding{61}} & 0.2190 & 0.5321 & 0.3403 & 0.4519 & 0.4164 & 0.3919 \\
\rowcolor{pink!28}
CoRA (ours)           & \textbf{0.2195{\scriptsize$\pm$0.03}} & \textbf{0.5751{\scriptsize$\pm$0.02}} & 0.2871{\scriptsize$\pm$0.04} & \textbf{0.4633{\scriptsize$\pm$0.01}} & \textbf{0.4306{\scriptsize$\pm$0.01}} & \cellcolor{pink!28}\textbf{0.3951{\scriptsize$\pm$0.02}} \\
\midrule
\multicolumn{7}{c}{Qwen3.5-2B} \\
\midrule
Random                & 0.0374 & 0.3000 & 0.1856 & 0.1078 & 0.0850 & 0.1432 \\
Top-$k$ BM25          & 0.1973 & 0.7250 & 0.3807 & 0.6492 & 0.5491 & 0.5003 \\
Top-$k$ SBERT         & 0.2023 & 0.7786 & \textbf{0.5122} & 0.5875 & 0.5263 & 0.5214 \\
Top-$k$ BERT          & 0.2087 & 0.7607 & 0.4343 & 0.6260 & 0.5666 & 0.5193 \\
DPP-BERT              & 0.1845 & 0.8036 & 0.3683 & 0.6416 & 0.5642 & 0.5125 \\
Top-$k$ Qwen3-Emb.    & 0.1929 & 0.7786 & 0.4830 & 0.5879 & 0.5245 & 0.5134 \\
MLSM\textsuperscript{\ding{61}} & 0.2067 & 0.7750 & 0.4220 & 0.6644 & 0.5867 & 0.5310 \\
\rowcolor{pink!28}
CoRA (ours)           & \textbf{0.2187{\scriptsize$\pm$0.02}} & \textbf{0.8350{\scriptsize$\pm$0.05}} & 0.4576{\scriptsize$\pm$0.02} & \textbf{0.6677{\scriptsize$\pm$0.01}} & \textbf{0.5922{\scriptsize$\pm$0.02}} & \cellcolor{pink!28}\textbf{0.5542{\scriptsize$\pm$0.02}} \\
\bottomrule
\end{tabular}
\end{table}

\paragraph{Overall Performance on Textual Benchmarks.}
Table~\ref{tab:main-classification} and Table~\ref{tab:main-generation} reports ICL performance on five classification and five generation benchmarks with three downstream LLMs. CoRA achieves the highest average score in every model-by-task-category block, outperforming both static retrieval methods and MLSM. The improvements are observed across classification and generation settings, indicating that the output-conditioned retrieval basis can improve exemplar selection beyond input-similarity-based retrieval under diverse task formulations.

The comparison with MLSM is particularly informative because CoRA retains its representative-layer selection procedure while changing how the selected representations are used for retrieval. MLSM computes input-side similarities at the selected layers and optimizes query- or batch-specific aggregation weights to combine them. In contrast, CoRA uses paired candidate outputs during index construction to fit a fixed low-rank retrieval basis, after which retrieval for an unseen query requires only its input representation and the precomputed index. CoRA achieves higher average results than MLSM across all evaluated backbones and task categories, indicating a consistent advantage under the considered evaluation protocol and across the tested retrieval scenarios.

This comparison extends the two hypotheses from our preliminary conference work. Representative layers preserve complementary input-side signals associated with H1, while the output-derived conditioning basis incorporates task-specific structure motivated by H2. Within the evaluated setting, the results suggest that integrating these two signals in a fixed retrieval basis is more effective than aggregating task-agnostic layerwise similarity signals alone.

\paragraph{Transferability Across Downstream LLMs.}
CoRA achieves the strongest classification and generation averages with Llama-3.2-1B, MobileLLM-Pro, and Qwen3.5-2B, despite their differences in model architecture, scale, and deployment configuration. In particular, CoRA remains effective with MobileLLM-Pro, the 4-bit-quantized and most resource-constrained backbone in our evaluation. This result indicates that the retrieval basis can improve ICL even when the downstream model operates under a restricted memory footprint. CoRA also attains the strongest aggregate results with Qwen3.5-2B, a more recent and higher-capacity backbone, showing that its retrieval formulation remains compatible with stronger downstream models and is applicable across heterogeneous model settings.

The magnitude of improvement varies across models and task categories, which indicates that retrieval utility depends on both the downstream model and the task. Nevertheless, the consistent aggregate advantage across all three backbones supports the use of CoRA as a model-agnostic pre-inference retrieval component. It requires neither modification and fine-tuning of the textual encoder, nor retrieval-time interaction with the downstream model.

\paragraph{Task-wise Behavior and Failure Modes.}
CoRA performs strongly on several structured generation tasks, including GeoQuery, MTOP, and SMCalFlow, where the candidate outputs follow relatively regular compositional patterns. These tasks provide output-side representations that can serve as informative conditioning signals during index construction. However, the benefits of output-conditioned alignment are not uniform across all generation settings.

NL2Bash provides a representative challenging case. CoRA does not attain the best score on NL2Bash with any of the three downstream backbones, consistent with prior observations that this dataset is difficult for retrieval-based ICL~\cite{liu2024unraveling,ye2023compositional}. A plausible explanation is the many-to-many relation between natural-language intents and Bash commands. Functionally equivalent solutions can differ substantially in the utilities, flags, and command compositions that they use. Consequently, surface-form output representations may provide a less stable proxy for the functional relation relevant to retrieval. This interpretation is also consistent with Table~\ref{tab:layer_position}, where restricting CoRA to lower-layer representations improves NL2Bash performance relative to middle-, upper-, and CKA-selected layer configurations. The result suggests that pattern-level input cues are particularly important for this task and identifies high-variability output spaces as a limitation of the current output-conditioning formulation.

\paragraph{Performance on Multimodal Retrieval.}
\begin{table}[t]
\centering
\caption{Results on multimodal benchmarks. Best score of each task is boldfaced. Avg. denotes the average score across all five tasks.}
\label{tab:mllm_icl}
\begin{tabular}{lcccc >{\columncolor{blue!6}}c}
\toprule
Method & VQAv2 & OKVQA & VizWiz & MSCOCO & \textbf{Avg.} \\
\midrule
\multicolumn{6}{c}{OpenFlamingo-3B} \\
\midrule
Random       & 35.38 & 10.81 &  7.87 & 72.17 & 31.56\\
RICES\_image & 36.22 & 11.60 & 17.59 & 84.03 & 37.36\\
RICES\_text  & 36.57 & 12.53 & 10.91 & 74.13 & 33.53\\
MMICES       & 39.51 & 12.93 & 15.60 & 80.55 & 37.15\\
\rowcolor{pink!28}
CoRA-M (ours)         & \textbf{41.22{\scriptsize$\pm$0.01}} & \textbf{13.20{\scriptsize$\pm$0.02}} & \textbf{17.74{\scriptsize$\pm$0.01}} & \textbf{86.54{\scriptsize$\pm$0.03}} & \textbf{39.68{\scriptsize$\pm$0.01}} \\
\midrule
\multicolumn{6}{c}{Qwen3.5-2B} \\
\midrule
Random       & 50.95 & 25.37 &  9.67 & 87.55 & 43.38 \\
RICES\_image & 44.91 & 24.27 & 15.24 & 90.56 & 43.74 \\
RICES\_text  & 47.60 & 26.02 & 14.31 & 83.85 & 42.95 \\
MMICES       & 51.21 & 27.60 & 16.30 & 89.70 & 46.20 \\
\rowcolor{pink!28}
CoRA-M (ours)         & \textbf{53.92{\scriptsize$\pm$0.02}} & \textbf{29.03{\scriptsize$\pm$0.01}} & \textbf{18.54{\scriptsize$\pm$0.03}} & \textbf{90.96{\scriptsize$\pm$0.01}} & \textbf{48.11{\scriptsize$\pm$0.02}} \\
\bottomrule
\end{tabular}
\end{table}

Table~\ref{tab:mllm_icl} reports multimodal ICL results on three visual question-answering benchmarks and the MSCOCO image-captioning benchmark. CoRA-M achieves the strongest result for every dataset with both OpenFlamingo-3B and Qwen3.5-2B. The improvement is therefore consistent across two downstream VLMs and across both question answering and captioning settings.

The multimodal baselines considered here either retrieve demonstrations using a single modality or combine visual and textual signals through sequential filtering and ranking. In contrast, CoRA-M constructs a shared low-rank retrieval space in which visual candidate representations and output-side textual representations jointly condition the selected textual input features. The consistent gains in Table~\ref{tab:mllm_icl} are compatible with the benefit of this joint conditioning design for multimodal exemplar retrieval. Section~\ref{sec:abls} further examines the respective roles of visual representations and target-side textual conditioning through controlled ablations.

\subsection{Analysis of Retrieval Design}\label{sec:abls}
\paragraph{Importance of Target Conditioning.}
\begin{table}
\caption{Ablation study on key components of CoRA. \textit{w/o Fusion} uses only the final encoder layer. \textit{w/o Cond.} replaces output-conditioned projection with an unconditional one. Subscript $r$ denotes the number of retained dimensions. Default CoRA uses $r{=}256$.}
\label{tab:ablation}
\centering
\resizebox{\linewidth}{!}{
\begin{tabular}{l cccccccc}
\toprule
Dataset & Top-$k$ BERT & w/o Fusion & w/o Cond. & CoRA$_{r=32}$ & CoRA$_{r=64}$ & CoRA$_{r=128}$ & CoRA$_{r=256}$ & CoRA$_{r=512}$ \\
\midrule
SST-5 & 0.3878 & 0.3906 & 0.3969 & 0.3921 & 0.4042 & 0.4042 & 0.4133 & 0.4015 \\
QNLI  & 0.6577 & 0.6886 & 0.7003 & 0.7201 & 0.7415 & 0.7410 & 0.7322 & 0.7346 \\
WebQs & 0.3204 & 0.3278 & 0.3297 & 0.3254 & 0.3302 & 0.3258 & 0.3327 & 0.3302 \\
MTOP  & 0.5302 & 0.5199 & 0.5338 & 0.5139 & 0.5329 & 0.5302 & 0.5347 & 0.5242 \\
\bottomrule
\end{tabular}}
\end{table}
To isolate the contribution of output-derived conditioning, we compare CoRA with an unconditional variant (\textit{w/o Cond.}) that uses the same representative layers and low-rank dimensionality but replaces the output-conditioned operator in Eq.~\ref{eq_fitting} with the identity map, i.e., $\mathbf{P}_{\mathbf{C}}=\mathbf{I}$. Consequently, \textit{w/o Cond.} sets $\hat{\mathbf{H}}_x^{(\ell_j)}=\tilde{\mathbf{H}}_x^{(\ell_j)}$ and estimates its low-rank retrieval basis directly from the concatenated selected-layer input representations, without using candidate-output representations. As shown in Table~\ref{tab:ablation}, both projection-based variants outperform Top-$k$ BERT on all four datasets, indicating that low-rank transformation of the selected multi-layer features can improve downstream ICL performance beyond direct final-layer similarity retrieval. CoRA further outperforms \textit{w/o Cond.} on every dataset. This result indicates that conditioning the retrieval basis on paired candidate outputs provides information beyond input-side multi-layer representations alone and improves exemplar selection in the evaluated tasks.

\paragraph{Effects of Multi-Layer Representations and Layer Position.}
We examine whether the representative-layer construction improves retrieval beyond using the final encoder layer alone. In Table~\ref{tab:ablation}, full CoRA outperforms the \textit{w/o Fusion} variant on all four datasets. This comparison indicates that the selected layers provide complementary input-side information that is not consistently retained in the final-layer representation. Combining these representations therefore yields a more informative input feature space for the subsequent output-conditioned alignment.

\begin{table}[t]
\caption{Effect of layer position on CoRA retrieval performance. Each setting uses three encoder layers, either from a fixed depth range or selected by CKA.}
\label{tab:layer_position}
\centering
\begin{tabular}{lcccc}
\toprule
Dataset & Lower (1-3) & Middle (5-7) & Upper (10-12) & CKA-selected \\
\midrule
NL2Bash & \textbf{0.3419} & 0.2772 & 0.2637 & 0.3210 \\
GeoQuery & 0.5036 & 0.4429 & 0.5500 & \textbf{0.5714} \\
\bottomrule
\end{tabular}
\end{table}

The value of combining layers raises a related question, whether a fixed layer range is suitable across tasks. We address this question by comparing three fixed-depth layer groups with the CKA-selected layers used by CoRA. Table~\ref{tab:layer_position} shows contrasting layer preferences across the two generation tasks. NL2Bash performs best with lower-layer features, with performance gradually decreasing as the selected layers become deeper. In contrast, GeoQuery benefits from upper-layer features, while the CKA-selected configuration achieves the strongest result among the compared settings. These results show that no fixed layer range is uniformly preferable for exemplar retrieval across tasks with different input-output characteristics. The lower-layer preference on NL2Bash is consistent with the task's reliance on pattern-level input cues, whereas the GeoQuery result suggests that higher-level representations are useful for regular compositional mappings. CKA-based selection therefore provides a data-dependent alternative to a fixed depth range. It achieves the strongest result on GeoQuery and remains more effective than middle- and upper-layer configurations on NL2Bash.

\paragraph{Effect of Low-Rank Dimensionality.}
Table~\ref{tab:ablation} also evaluates the retained dimensionality $r$ of the retrieval basis. The effect of $r$ varies by task and does not follow a monotonic trend. Increasing the dimensionality from $r=32$ to an intermediate setting consistently improves performance across all four datasets, although the optimal value differs across tasks. QNLI reaches its strongest result at $r=64$, SST-5 at $r=128$ or $256$, and WebQs and MTOP at $r=256$. Further increasing the dimension to $r=512$ does not produce a consistent additional gain and instead reduces performance on several datasets. We therefore adopt $r=256$ as the default configuration, as it offers strong and broadly competitive results across the evaluated tasks while maintaining a compact $r$-dimensional index and query representation.

\paragraph{Effect of the Number of Retrieved Exemplars.}
\begin{figure}
    \centering
    \includegraphics[width=\linewidth]{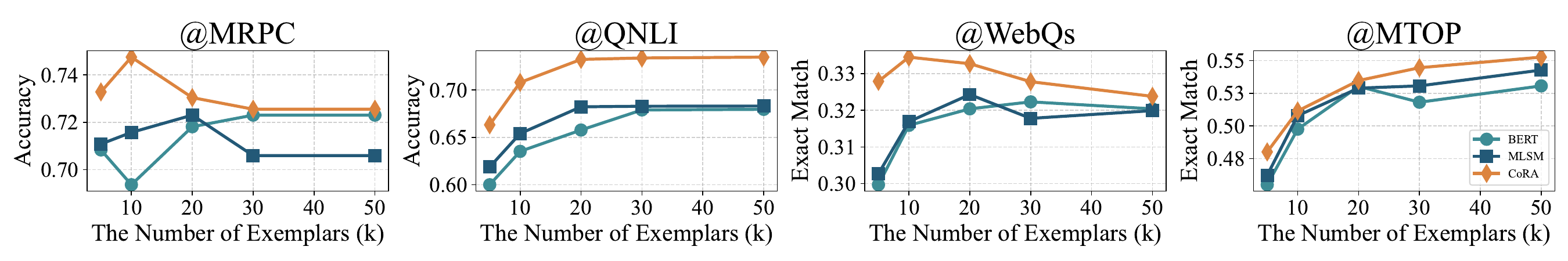}
    \caption{Performance comparison on four datasets with various number of exemplars $k$.}
    \Description{Ablation study on the number of exemplars.}
    \label{fig:various_k}
    
\end{figure}
Figure~\ref{fig:various_k} examines how the number of retrieved exemplars $k$ affects downstream ICL performance. The preferred context size differs substantially across tasks rather than following a shared pattern. MRPC and WebQs attain their strongest CoRA results with relatively small values of $k$, whereas QNLI and MTOP benefit from a larger context before performance plateaus or continues to improve. These task-specific trends indicate that increasing the number of demonstrations does not uniformly improve ICL performance.

Nevertheless, CoRA remains competitive across the evaluated context sizes. It achieves the strongest result at every displayed value of $k$ on all four datasets. These results indicate that the benefit of output-conditioned retrieval is not tied to a single demonstration budget. In practice, this permits $k$ to be selected according to the available prompt-length and downstream-inference budget while retaining the benefit of the learned retrieval basis.

\paragraph{Analysis of Multimodal Signal Placement and Target-Side Textual Conditioning.}
\begin{table}[t]
\caption{Ablation on visual signal placement and target-side textual conditioning in CoRA-M.}
\label{tab:mllm_abls}
\centering
\resizebox{\linewidth}{!}{
\begin{tabular}{lcccccc}
\toprule
Dataset & CoRA-M (shuffle-vis) & CoRA-T & CoRA-M (input-only) & CoRA-M (target-only) & CoRA-M (layerwise-target) & CoRA-M \\
\midrule
VQAv2  & 49.57 & 51.83 & 52.21 & 52.46 & 52.38 & \textbf{53.89}  \\
VizWiz & 15.03 & 17.41 & 17.87 & 17.13 & 17.48 & \textbf{18.56}  \\
\bottomrule
\end{tabular}}
\end{table}
Table~\ref{tab:mllm_abls} examines how visual features and target-side textual representations are incorporated into CoRA-M. The default CoRA-M configuration uses the final-layer representation of each candidate response together with its visual representation in the conditioning matrix, and additionally appends visual features to the retrieval representation. To isolate these design choices, \textit{input-only} adds visual features only to the retrieval representation, whereas \textit{target-only} adds them only to the conditioning matrix. Both retain the same final-layer target-text representation. \textit{CoRA-M (layerwise-target)} replaces the final-layer target-text representation with selected layerwise target features while retaining visual features on both sides. \textit{CoRA-T} applies the text-only formulation without visual features, and \textit{CoRA-M (shuffle-vis)} randomly permutes visual features across candidate examples while preserving their dimensionality and marginal distribution.

The full CoRA-M configuration achieves the strongest result on both VQAv2 and VizWiz. Adding visual features only to the retrieval representation improves over CoRA-T on both datasets, whereas adding them to the conditioning matrix improves VQAv2 but not VizWiz. The combined configuration is consistently stronger than either one-sided variant, indicating that the two placements provide complementary information. Moreover, CoRA-M outperforms \textit{CoRA-M (layerwise-target)} on both datasets, showing that the final-layer target representation is a more effective target-side textual conditioning choice than the selected layerwise alternative.

The shuffled-visual variant performs below CoRA-T on both datasets. By preserving visual feature dimensionality while disrupting their correspondence with candidate examples, this result indicates that CoRA-M gains depend on correctly aligned visual information rather than additional feature dimensions alone. Together, these ablations support the default CoRA-M design and motivate further analysis of its query-level retrieval behavior.

\paragraph{Retained Fitted Energy and Retrieval Performance.}
\begin{table}[t]
\centering
\caption{Retained fitted energy and downstream ICL performance under
a common rank budget of \(r=256\). All values of
\(\rho_r(\mathbf{W})\) are evaluated with respect to the same
output-conditioned fitted matrix \(\hat{\mathbf{H}}_x\).}
\label{tab:conditional_basis_analysis}
\begin{tabular}{llcc}
\toprule
Dataset & Basis
& \(\rho_r(\mathbf{W})\)
& Downstream metric \\
\midrule
MRPC
& Random Orthogonal & 0.116 & 0.7010 \\
& Unconditioned     & 0.648 & 0.7206 \\
& CoRA              & \textbf{0.927} & \textbf{0.7353} \\
\midrule
WebQS
& Random Orthogonal & 0.113 & 0.3022 \\
& Unconditioned     & 0.593 & 0.3223 \\
& CoRA              & \textbf{0.884} & \textbf{0.3327} \\
\bottomrule
\end{tabular}
\end{table}

Following Eq.~\ref{eq:retained_fitted_energy}, we measure the fitted energy retained by different bases under the same rank budget. All retained-energy ratios are computed with respect to the same output-conditioned fitted matrix \(\hat{\mathbf{H}}_x\). We compare CoRA's basis with an unconditioned basis formed by the top-\(r\) right singular vectors of \(\tilde{\mathbf{H}}_x\), and a random orthogonal basis of the same dimension.

Table~\ref{tab:conditional_basis_analysis} shows that CoRA retains the largest fraction of the output-conditioned fitted energy under the shared rank budget, consistent with Proposition~\ref{prop:conditional_basis}. The unconditioned basis retains substantially less fitted energy, indicating that the dominant directions of the raw input representations do not fully coincide with those identified through output conditioning. The random orthogonal basis exhibits the lowest retained energy, reflecting its lack of alignment with the fitted representation. The downstream results follow the same ordering on both datasets. Relative to the unconditioned and random alternatives, CoRA combines greater preservation of the fitted structure with higher ICL performance. This agreement connects the rank-constrained basis construction characterized in Proposition~\ref{prop:conditional_basis} with its empirical utility for exemplar retrieval under the same compact representation budget.

\subsection{Resource Cost and On-Device Deployment}\label{sec:exp_sys}
\paragraph{Aggregate Pre-Inference Retrieval Cost and Memory Scaling.}
\begin{figure}[t]
    \centering
    \begin{minipage}[t]{0.48\linewidth}
        \vspace{0pt}
        \centering
        \captionof{table}{Aggregate pre-inference retrieval latency cost (s) and peak CPU memory (MB) on MRPC under identical single-CPU settings. $cs$ denotes the fixed chunk size used by streaming under limited memory budgets.}
        \label{tab:computation}
        \vspace{2pt}
        \setlength{\tabcolsep}{3pt}
        \resizebox{\linewidth}{!}{
        \begin{tabular}{l ccccc}
        \toprule
        Method & Top-$k$ BERT & DPP-BERT & MLSM & CoRA & CoRA$_{cs=64}$\\
        \midrule
        Time & 47.33 & 56.20 & 54.36 & 47.94 & 57.23 \\
        Memory & 2315 & 2366 & 2609 & 3356 & 2396 \\
        \bottomrule
        \end{tabular}}
    \end{minipage}
    \hfill
    \begin{minipage}[t]{0.50\linewidth}
        \vspace{0pt}
        \centering
        \includegraphics[width=\linewidth]{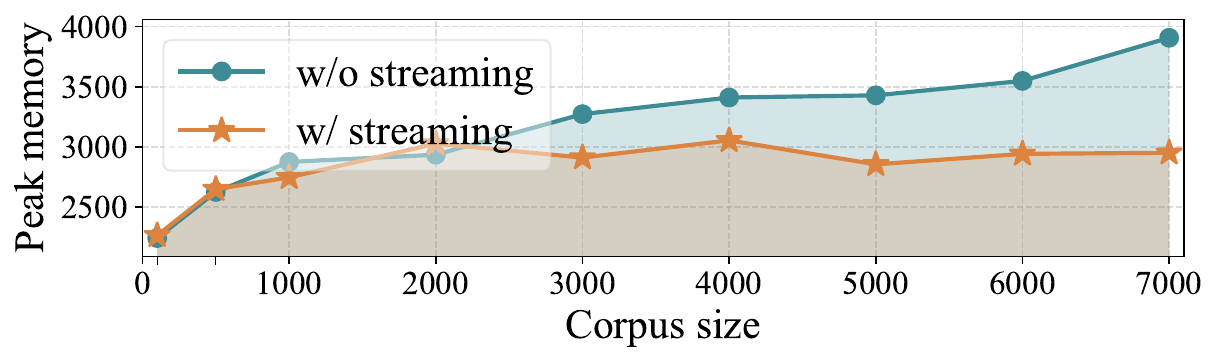}
        \vspace{-15pt}
        \captionof{figure}{Peak CPU memory (MB) versus corpus size during the pre-inference retrieval workflow, with and without chunked streaming.}
        \Description{Memory results.}
        \label{fig:corpus_size_peak_memory}
    \end{minipage}
    \vspace{-8pt}
\end{figure}
Table~\ref{tab:computation} reports aggregate pre-inference retrieval cost and peak CPU memory on MRPC under identical single-CPU settings. The reported time covers the complete workflow from representation computation through exemplar selection and prompt construction for the evaluated candidate pool and query set, but excludes downstream-model inference. It should therefore be interpreted as an aggregate pre-inference cost rather than query-time latency alone. Under this measure, CoRA requires $47.94$~s, which is comparable to Top-$k$ BERT and lower than DPP-BERT and MLSM. Unlike MLSM, CoRA performs no gradient evaluation or optimizer-based update during this workflow. When memory is constrained, chunked streaming with $|\mathcal{D}_b|=64$ reduces peak memory from 3,356~MB to 2,396~MB, placing it close to that of Top-$k$ BERT. This reduction increases aggregate pre-inference time to 57.23~s, reflecting the additional sequential passes over the candidate pool required by streaming.

Figure~\ref{fig:corpus_size_peak_memory} examines memory scaling as the candidate-pool size increases from 100 to 7,000 on QNLI under identical CPU-only settings. Without streaming, peak memory increases from approximately 2.3~GB to 3.9~GB over the measured range because the workflow materializes the full feature matrix $\tilde{\mathbf{H}}_x$ and conditioning matrix $\mathbf{C}$ before fitting the retrieval basis. With streaming, peak memory remains within a narrower range after approximately 3,000 candidates. The remaining growth is attributable to the persistent projected index $\mathbf{Z}\in\mathbb{R}^{n\times r}$, dataset objects, and label storage. This behavior is consistent with Eq.~\ref{eq_overhead}. The working memory for alignment is independent of $n$ for fixed chunk and feature dimensions, whereas persistent index storage retains an $\mathcal{O}(nr)$ dependence.

\paragraph{End-to-End On-Device Deployment on Raspberry Pi 5.}
\begin{table}[t]
\caption{End-to-end on-device results on Raspberry Pi 5 (8~GB RAM). Use \textit{Qwen3.5-0.8B} as the on-device model. Training-based methods run out of memory.}
\label{tab:ondevice}
\centering
\begin{tabular}{lccc >{\columncolor{pink!28}}c c}
\toprule
\multicolumn{6}{c}{Classification} \\
\midrule
Method & Top-$k$ BERT & DPP-BERT & MLSM & CoRA & EPR/CEIL/TTF \\ 
\midrule
MRPC  & 0.7059 & 0.6985 & 0.6961 & \textbf{0.7353} & \textcolor{red}{OOM} \\
QNLI  & 0.7368 & 0.7690 & 0.7318 & \textbf{0.7752} & \textcolor{red}{OOM} \\
\midrule
\multicolumn{6}{c}{Generation} \\
\midrule
Method & Top-$k$ BERT & DPP-BERT & MLSM & CoRA & EPR/CEIL/TTF \\
WebQs & 0.2106 & 0.2165 & 0.1969 & \textbf{0.2288} & \textcolor{red}{OOM} \\ 
MTOP  & 0.5839 & 0.5928 & 0.5996 & \textbf{0.6076} & \textcolor{red}{OOM} \\
\bottomrule
\end{tabular}
\end{table}
We evaluate textual CoRA in an end-to-end on-device setting using a Raspberry Pi~5 with 8~GB RAM, where the complete pre-inference retrieval workflow and downstream LLM inference are executed locally. To fit the device memory budget, we use \textit{Qwen3.5-0.8B} as the on-device inference model. As shown in Table~\ref{tab:ondevice}, CoRA achieves the strongest result on all four evaluated tasks relative to Top-$k$ BERT, DPP-BERT, and MLSM. EPR, CEIL, and TTF could not complete their corresponding task-adaptation and pre-inference retrieval workflows within the available device memory and are therefore reported as OOM. In contrast, CoRA completes the full textual retrieval-and-inference pipeline without gradient-based retriever adaptation. This experiment establishes end-to-end feasibility for textual CoRA on the evaluated device configuration.

\paragraph{Task-Adaptation and Pre-Inference Retrieval Cost Compared with Training-Based Methods.}
\begin{table}[t]
\centering
\caption{Efficiency-accuracy trade-off on MRPC. Time includes the full task adaptation/training and retrieval pipeline. Peak memory is reported per GPU. Unlike the CPU-only setting in the main text, this experiment is conducted on an 8$\times$RTX 4090 server because the training-based baselines require multi-GPU optimization.}\label{tab:mrpc_efficiency_tradeoff}
\begin{tabular}{lcccc}
\toprule
Method & GPU Setup & Peak Mem./GPU (MB) & Time (s) & Acc. \\
\midrule
EPR  & 8$\times$RTX 4090 & 11888 & 380 & 0.7549 \\
CEIL & 8$\times$RTX 4090 & 23480 & 506 & 0.7645 \\
TTF  & 8$\times$RTX 4090 & 13000 & 491 & 0.7525 \\
\midrule
CoRA & 1$\times$RTX 4090 & 2248  & 21  & 0.7353 \\
\bottomrule
\end{tabular}
\end{table}
We compare CoRA with three representative training-based demonstration-selection methods on MRPC: EPR, CEIL, and TTF. Because these methods require task-specific retriever adaptation before exemplar selection, Table~\ref{tab:mrpc_efficiency_tradeoff} reports the total cost from method-specific adaptation through pre-inference retrieval, rather than retrieval alone.

The training-based methods are executed on an $8\times$RTX~4090 server to accommodate their task-specific optimization procedures, whereas CoRA's closed-form workflow runs on a single RTX~4090. We report peak memory per GPU together with total wall-clock time for the complete adaptation-to-retrieval workflow. Under these execution settings, CoRA uses substantially less device memory and completes the workflow markedly faster than EPR, CEIL, and TTF. These savings are accompanied by accuracy differences of $1.96$, $2.92$, and $1.72$ percentage points, respectively, while CoRA reduces adaptation-to-retrieval time by factors of $18.1$, $24.1$, and $23.4$.

These results position CoRA as an efficiency-oriented alternative. CEIL attains the strongest MRPC accuracy when multi-GPU task-specific optimization is available. CoRA instead provides a gradient-free option for settings in which adaptation time, memory, or retriever-training resources are constrained.

\section{Conclusion and Limitations}\label{sec:conclusion}
We presented \emph{Conditional Retrieval Alignment} (CoRA), a gradient-free framework for task-conditioned exemplar retrieval in on-device ICL. CoRA uses paired candidate inputs and outputs to construct an output-conditioned retrieval space from complementary frozen-encoder layers. Candidate outputs are required only during offline index construction. At query time, CoRA retrieves exemplars using the query input and a compact precomputed low-rank index. This design provides task-conditioned retrieval without retriever fine-tuning, backpropagation, or target-model calls. We further derived an exact streaming construction procedure and extended the framework to multimodal retrieval. Experiments across textual and visual-language benchmarks, together with end-to-end deployment on a \emph{Raspberry Pi~5}, demonstrate the effectiveness and practical feasibility of this approach under edge resource budgets.

The current study also has several limitations. The current deployment study evaluates a software implementation on a Raspberry Pi~5. Extending the streaming construction to continuously evolving candidate pools, and designing specialized accelerators for better end-to-end energy consumption, remain important directions for future work.

\bibliographystyle{ACM-Reference-Format}
\bibliography{sample-base}

\end{document}